\date{}
\newtheorem{theorem}{Theorem}
\newtheorem{definition}{Definition}
\newtheorem{proposition}{Proposition}
\theoremstyle{remark}
\newtheorem*{remark}{Remark}
\newcommand\shortrightarrow{\scalebox{0.7}[1]{$\rightarrow$}}
\newcommand{\BibTeX}{B\kern-.05em{\sc i\kern-.025em b}\kern-.08em\TeX}
\begin{document}

%%%%%%%%%%%%%%%%%%%%%%%%%%%%%%%%%%%%%%%%%%%%%%%%%%%%%%%%%%%%%%%%%%%%%%%%

\begin{frontmatter}

%%% Use this command to specify your submission number.
%%% In doubleblind mode, it will be printed on the first page.

%\paperid{252} 

%%% Use this command to specify the title of your paper.

\title{Cautious Calibration in Binary Classification}
%Cautious Calibration in Binary Classification for Mitigating Worst-Case Risks
%Aiming for Per-Probability Risk Mitigation}

% Cautious Calibration in Binary Scoring Classifiers: Aiming for Per-Probability Risk Mitigation in High-Risk Scenarios

%%% Use this combinations of commands to specify all authors of your 
%%% paper. Use \fnms{} and \snm{} to indicate everyone's first names 
%%% and surname. This will help the publisher with indexing the 
%%% proceedings. Please use a reasonable approximation in case your 
%%% name does not neatly split into "first names" and "surname".
%%% Specifying your ORCID digital identifier is optional. 
%%% Use the \thanks{} command to indicate one or more corresponding 
%%% authors and their email address(es). If so desired, you can specify
%%% author contributions using the \footnote{} command.

\author[A]{\fnms{Mari-Liis}~\snm{Allikivi}\thanks{Corresponding Author. Email: mari-liis.allikivi@ut.ee}}
\author[A]{\fnms{Joonas}~\snm{Järve}}
\author[A]{\fnms{Meelis}~\snm{Kull}} 

%\author[A]{\fnms{Mari-Liis}~\snm{Allikivi}\orcid{0000-0002-1019-3454}\thanks{Corresponding Author. Email: mari-liis.allikivi@ut.ee}\footnote{First author.}}
%\author[A]{\fnms{Joonas}~\snm{Järve}\orcid{....-....-....-....}}
%\author[A]{\fnms{Meelis}~\snm{Kull}\orcid{0000-0001-9257-595X}} 

\address[A]{Institute of Computer Science, University of Tartu, Estonia}
%\address[B]{Short Affiliation of Second Author and Third Author}
%\address[C]{Short Alternate Affiliation of Third Author}

%%% Use this environment to include an abstract of your paper.

\begin{abstract}
Being cautious is crucial for enhancing the trustworthiness of machine learning systems integrated into decision-making pipelines. Although calibrated probabilities help in optimal decision-making, perfect calibration remains unattainable, leading to estimates that fluctuate between under- and overconfidence. This becomes a critical issue in high-risk scenarios, where even occasional overestimation can lead to extreme expected costs. In these scenarios, it is important for each predicted probability to lean towards underconfidence, rather than just achieving an average balance. In this study, we introduce the novel concept of cautious calibration in binary classification. This approach aims to produce probability estimates that are intentionally underconfident for each predicted probability. We highlight the importance of this approach in a high-risk scenario and propose a theoretically grounded method for learning cautious calibration maps. Through experiments, we explore and compare our method to various approaches, including methods originally not devised for cautious calibration but applicable in this context. We show that our approach is the most consistent in providing cautious estimates. Our work establishes a strong baseline for further developments in this novel framework.

\end{abstract}

\end{frontmatter}

%%%%%%%%%%%%%%%%%%%%%%%%%%%%%%%%%%%%%%%%%%%%%%%%%%%%%%%%%%%%%%%%%%%%%%%%

\section{Introduction}

Classifier calibration is used to improve probabilistic predictions of classification models. For example, in binary classification, if a set of instances were assigned a predicted positive class probability of 0.8, then 80\% of these instances should actually be positive. This is an appealing property both intuitively for humans to interpret the model's predictions and for optimal decision making. 

A key application of calibrated probability estimates lies in their usefulness in mediating between the model training stage and the downstream application. Since exact knowledge of costs and other variables in downstream tasks is often unavailable or subject to change, calibrated models are invaluable in these practical scenarios by being adaptable to these fluctuations \citep{ovadia_can_2019}. This makes the further development of different post hoc calibration methods like isotonic calibration \citep{zadrozny2002transforming}, logistic calibration \citep{platt1999probabilistic}, beta calibration \citep{kull2017beta} and other approaches for training calibrated models like temperature scaling, matrix and vector scaling \citep{guo2017calibration} or Dirichlet calibration \citep{kull2019beyond} an active area of research. 

\begin{figure}[h]
\centering
\includegraphics[width=8.5cm]{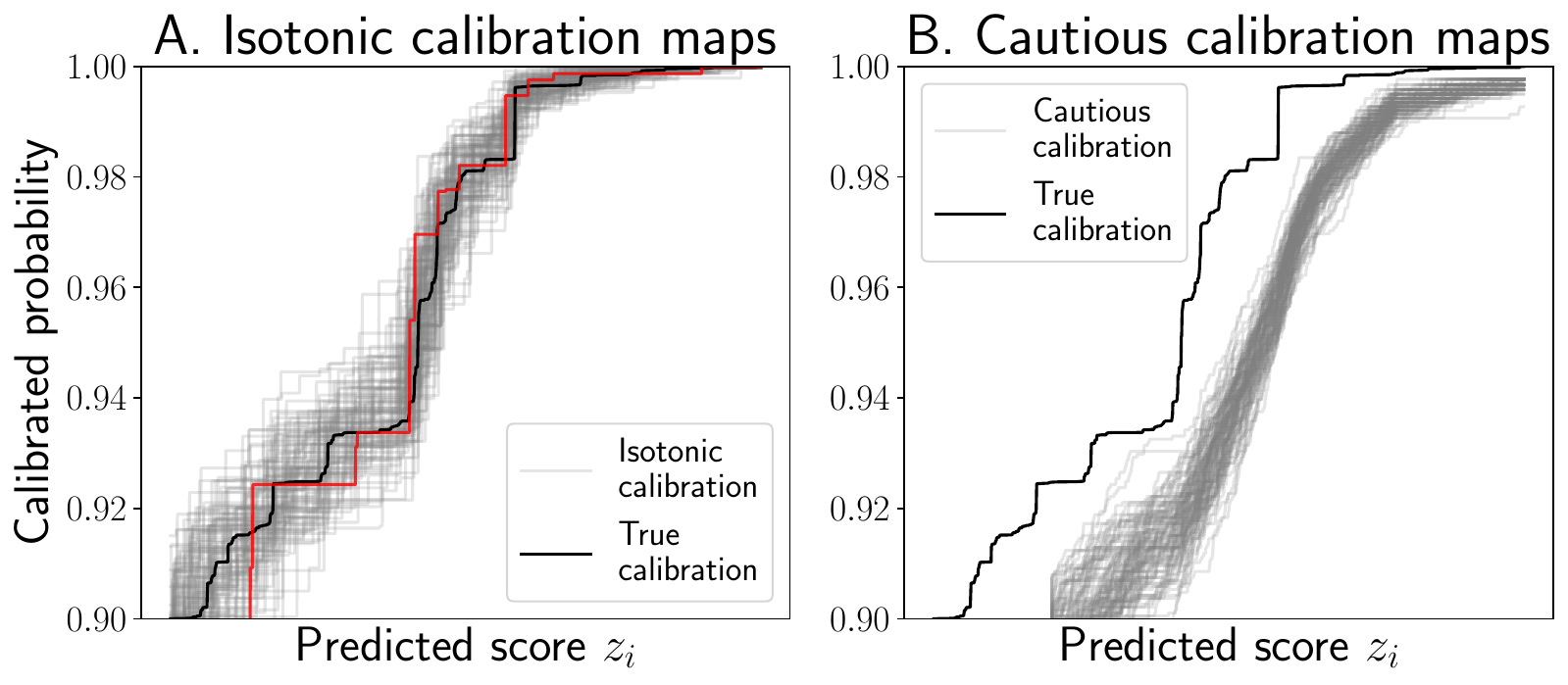}
\caption{Comparison of 100 learned calibration maps (grey) based on one simulated true calibration map (black). A. Maps learned by isotonic calibration, with an arbitrary one highlighted in red. B. Maps learned with a cautious calibration approach (HTLB+CP).}
\vspace{18pt}
\label{fig:intro_figure}
\end{figure}

However, a universal challenge with all calibration methods is their inability to reach perfect calibration. These methods aim to learn the true calibrated values, which reflect the actual unknown conditional probabilities $Pr(Y=1|z_i)$ of observing class 1 for a predicted score $z_i$, i.e. the confidence predicted by a model. This ideal is not achievable due to having finite amounts of data, but also since methods have limitations due to their parametric nature (e.g., logistic, beta calibration) or biases like a tendency towards overconfidence (e.g., isotonic regression) \citep{allikivi2019non} or other reasons that cause under- and overconfidence \citep{bai2021don}. Even the most effective methods still estimate the true probability slightly inaccurately, with the direction of this deviation remaining unknown. This issue is illustrated in Figure \ref{fig:intro_figure}A. Suppose that we know the true calibration map\footnote{We generate synthetic true calibration maps, as the true calibration maps are always unknown for real data.} (black) and sample 100 calibration sets from it using the approach described in Section \ref{sec:exp_setup}. Isotonic calibration has been applied to all calibration sets and 100 estimated maps have been learned (grey). The plot demonstrates how isotonic calibration, one of the classical and widely used calibration methods, is on average quite precise in its estimation of the true calibration map, but individual estimations fluctuate between over- and underconfidence.

In situations where costs of different errors vary significantly, overconfidence can lead to excessively high costs while underconfidence results in slightly less optimal costs, as will be demonstrated in Section \ref{sec:optimalrisklevelselection}. As decision-making in case of imbalanced costs happens in the high-probability region, we focus only on probabilities 0.9 and higher, as we need high certainty to make risky decisions. From Figure \ref{fig:intro_figure}A we can see that while calibration methods can be precise on average, one single learned map (red) can be overconfident for some scores and underconfident for others. As overconfidence leads to worse outcomes in our problem setting, we want to avoid it for every individual map at every score. Behind every score there is a group of instances in our distribution that will be mapped to that prediction, from now on referred to as a \emph{score group}. In order to avoid bad outcomes for all score groups, we should expect every one of them to perform well, instead of only the model's average outcome being good.

Our research introduces a novel concept called \emph{cautious calibration} that, by design, addresses this problem. The aim is to learn calibration maps that always stay on the underconfident side, providing lower bounds for the true calibrated values. This is illustrated in Figure \ref{fig:intro_figure}B. The paper starts with an example scenario of optimal risk level selection where cautiousness is needed to ensure that there are no overly negative outcomes for any score group. This is followed by an overview of existing methods that could be used for learning cautious calibration maps. Next, we propose our own, more reliable and theoretically principled method for cautious calibration. Finally, we show that our work provides a strong baseline for cautious calibration and demonstrate its critical relevance in our example scenario.

\section{Example Scenario: Optimal Risk Level Selection}
\label{sec:optimalrisklevelselection}

The usefulness of cautious calibration can be demonstrated in a setting where each prediction leads to a decision with potentially different costs. Before defining this setting formally, let's build intuition by following a simplified scenario in the self-driving domain. Imagine a machine learning model that is trained to identify images of clear roads (class 1) and roads with obstacles (class 0). Based on the predicted probability (probability for the road to be clear), one has to select the speed of the car, or in other terms, select the risk level. The more certain we are, the higher speed we are willing to choose. However, the outcomes are dependant on the chosen speed. If we choose a high speed, then we will gain in faster arrival times when observing class 1, but lose greatly in case observing class 0 and causing an accident. This means that we should only select high risks in case of very high predicted probabilities, meaning that the chances of mistaking are extremely small. In case of lower speed, it will take longer to reach the destination, but accidents are less costly.

We call this setting \emph{optimal risk level selection} and formalize it in the following way. We have a probability prediction function $\hat{c}: \mathcal{X} \rightarrow [0,1]$, that, based on the input, predicts the estimates of the true calibrated probability, similarly to our model that predicted the probability for the road to be clear. We have an outcome function $o: (\mathcal{Y}, [0,\infty)) \rightarrow \mathbb{R}$, that calculates the outcome based on a class label $y$ and a risk level $\xi \in [0, \infty)$. This would equate to calculating the gain or cost \footnote{We will use the term \emph{outcome} as we have both gains (for positive outcomes) and costs (for negative outcomes). Higher outcome is better.} of selecting a speed and then observing either clear road or road with an obstacle, which in this toy example is obviously difficult to define. We have selected the following outcome function to illustrate our ideas:
\[
o(y, \xi ) = 
\begin{cases}
    \xi  & \text{if } y = 1, \\
    -\xi ^{l} & \text{if } y = 0,
\end{cases}
\]
where $l \in [1, \infty)$ is a fixed constant describing the imbalance of costs. This function is chosen for its simplicity and for how it captures the idea of small positive gains for seeing a positive class versus large negative costs for seeing a negative class, worsening greatly with the increase of the risk level.

The final missing part is to decide how to choose a risk level $\xi$  based on a true calibrated probability $c \in [0, 1)$ (we omit the trivial case of $c=1$). We do it by choosing the risk level that gives the highest expected outcome, assuming the data points are drawn from $Y \sim Bernoulli(c)$. This can be thought of as optimizing the outcome for one score group. We can define it formally as:
\begin{equation}
r_{opt}(c) = \arg\max_{\xi}\mathbb{E}_{Y \sim Bern(c)}[o(Y, \xi)] = \left(\frac{c}{l(1-c)}\right)^{\frac{1}{l-1}}  \label{argmaxrisk}
\end{equation}
where the derivation of the optimum is given in Supplementary \ref{proofs}.

Since in practice we don't have access to the true calibrated probability $c$, we use the estimation $\hat{c}$ to select the optimal risk level. However, as $c \neq \hat{c}$ in practice, we will not be able to choose the truly optimal risk level for the true underlying data generation process $Y \sim Bernoulli(c)$. This means that our chosen risk level $\xi_{\hat{c}} = r_{opt}(\hat{c})$ will result in an expected outcome $\mathbb{E}_{Y \sim Bern(c)}[o(Y, \xi_{\hat{c}})]$, which will be worse compared to the optimal one $\mathbb{E}_{Y \sim Bern(c)}[o(Y, \xi_c)]$.

Figure \ref{fig:outcome_example} illustrates the difference in how much the expected outcome worsens when the calibration estimate is overconfident vs underconfident. We create a hypothetical example where for each true probability $c$ we have one overconfident estimation $\hat{c}_{+} = c + 0.01$ and one underconfident estimation $\hat{c}_{-} = c - 0.01$. Then, for a range of different $c$'s represented on the x-axis, we find the true optimal risk level $\xi_{c} = r_{opt}(c)$ and the corresponding expected outcome $\mathbb{E}_{Y \sim Bern(c)}[o(Y, \xi_c)]$, the latter illustrated by the solid black line. The dotted line represents the expected outcome when the risk level is chosen with the underconfident estimation $\hat{c}_{-}$. We can see that the expected outcome is smaller than optimal, especially for high probabilities, but at the same time it stays on the positive side. For the analogous situation with the overconfident estimation $\hat{c}_{+}$ shown as the dashed line, we see a different trend, where in case of overestimation of high probabilities, our chosen risk level is too large to an extent where it results in extremely negative expected costs.

\begin{figure}[t]
    \label{fig:outcome_example}
  \centering
  \includegraphics[width=8.5cm]{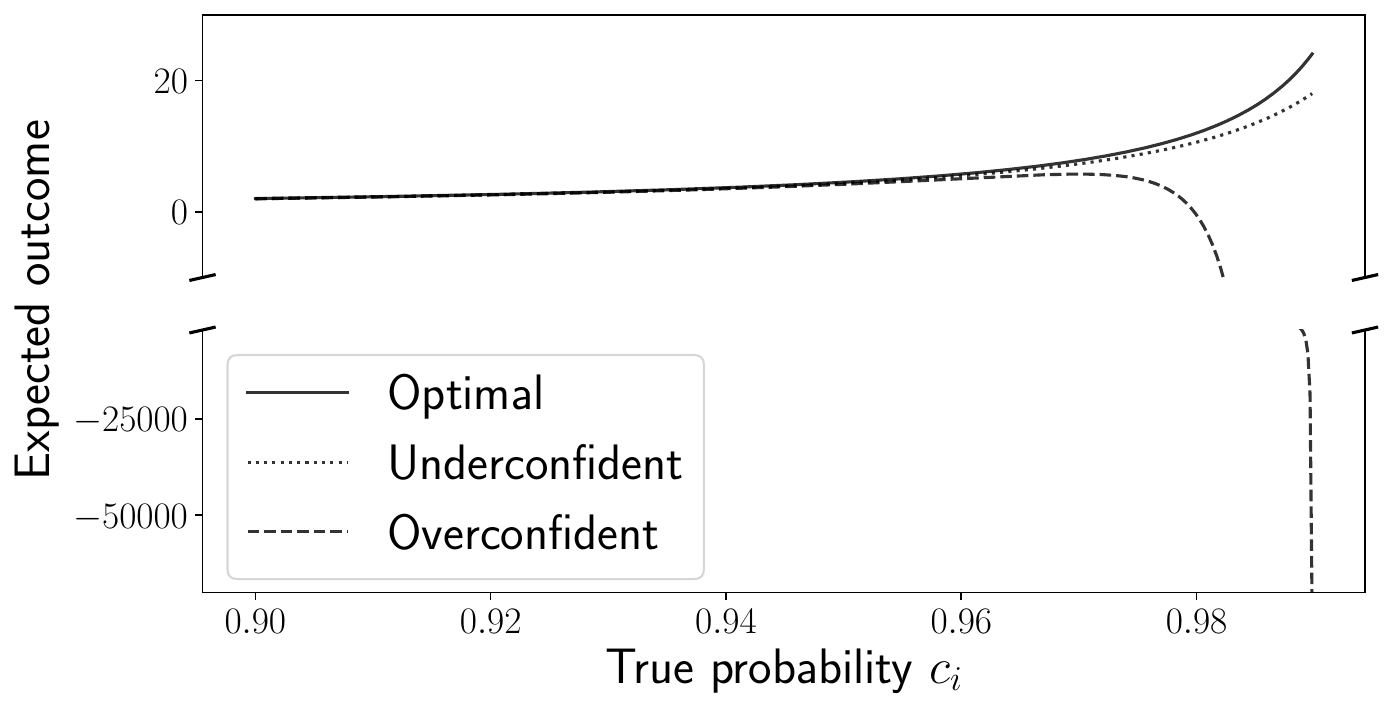}
  \caption{Expected outcomes of a hypothetical situation where risk levels are chosen with the true calibrated probabilities $c$, underestimated probabilities $\hat{c}_{-} = c - 0.01$, and overestimated $\hat{c}_{+} = c + 0.01$.}
  \vspace{18pt}
\end{figure}

Our main purpose is to develop cautious calibration methods and measure how consistent they are in maintaining underconfidence. However, this theoretical setting is a way to measure the impact of cautiousness on the expected outcomes resulting from a specific decision-making process. Optimal risk level selection is just one example of a scenario where cautiousness is necessary. Further exploration to find other high-risk decision-making frameworks could enhance our understanding of the role and benefits of cautiousness.

\section{Related work}
\label{sec:relatedwork}

To our best knowledge, no works have presented the need to directly use cautiously calibrated probabilities for decision-making. Still, there are methods that try to estimate a confidence interval around the predicted probabilities. This is usually done to evaluate the trustworthiness of the prediction, but a lower value of the confidence interval can be used for building a cautious calibration map. We will discuss several methods and describe the theoretical problems that might occur when using these methods for cautious calibration estimation.

\textbf{Simplified Venn-Abers predictors \citep{vovk2012venn}}. Venn-Abers predictors output multiple probability predictions given one score, one of which is well-calibrated. Getting a Venn-Abers estimate for a test instance requires retraining and re-calibrating the model from the start twice, adding the test instance with both possible labels ($1$ and $0$). This is computationally inefficient and infeasible in the case of LLMs. There is a more efficient (but still inefficient) way to skip the retraining step, but this will lose the validity guarantee about one of the predictions being well-calibrated. This is done with the simplified Venn-Abers (SVA) approach, where the two possible predicted probabilities are obtained by adding an instance with label $1$, re-calibrating with isotonic calibration, and then doing the same with label $0$. We use the label $0$ result as the cautious calibration estimate. The main drawbacks of this method are that it is not conservative enough, as adding one element with label $0$ will not change the calibration map sufficiently, and it does not guarantee validity. Still, the lower estimations can be used as potential cautious calibration estimates.

\textbf{Reliably calibrated isotonic calibration (RCIR) \citep{nyberg2021reliably}}. This method calculates credible intervals for the bins found by isotonic calibration. In the article, it is done to create a more stable isotonic calibration by merging the bins where the size of the credible interval exceeds some threshold, most likely due to the bin containing too few elements. Still, the lower bounds of the calculated intervals, both for original or merged isotonic bins, can be used for cautious calibration estimates. The credible intervals used in this work are calculated using Bayesian statistics approach, where Beta distribution with a uniform prior is used to calculate the posterior distribution for the probability in one isotonic calibration bin. This approach has two problems. Firstly, the use of Bayesian credible intervals depends on the chosen prior. It is typical to use a uniform prior, but as the true prior is unknown, we are unable to provide probabilistic guarantees for these types of intervals. The second problem arises due to having the same lower bound for all elements in the bin, a problem which will be explained in Section \ref{sec:motivation}.

\textbf{Histogram binning with Clopper-Pearson intervals \citep{park2020pac}.} This method is similar to the previous and is meant for finding confidence intervals around the predicted probabilities. The intervals are also calculated in bins, but this time using histogram binning \citep{zadrozny2001obtaining}. The second change is the use of frequentist confidence intervals, more precisely, Clopper-Pearson intervals \citep{clopper1934use}. The histogram binning method shares the same flaws as the isotonic calibration binning approach, while also dependant on the binning parameters. Clopper-Pearson intervals, however, are superior to the Bayesian approach, since they are not dependant on any priors, are known to be conservative and come with certain probabilistic guarantees, which will be discussed shortly.

We also include some classical calibration methods like isotonic calibration \citep{zadrozny2002transforming}, logistic calibration \citep{platt1999probabilistic} and beta calibration \citep{kull2017beta} in our experiments. From the cautious approaches we include SVA and a combination of the last two approaches. We take the good parts of both: isotonic calibration (or RCIR) bins from the first and the Clopper-Pearson confidence intervals from the second, hoping that this gives the strongest method to compare against.

Next, we will introduce our method for cautious calibration, which can be thought of as a generalization of Clopper-Pearson bounds and show how to approach binning to make sure that we are being truly cautious.

\section{Cautious Calibration}

\subsection{Notation}

Before explaining our solution and the motivation behind it, we introduce the setting, which follows the typical binary classification framework. We consider data points drawn i.i.d from $(X, Y) \sim \mathcal{D}$ from the distribution $\mathcal{D}$ over $\mathcal{X} \times \mathcal{Y}$, where $\mathcal{X}$ is the instance space and $\mathcal{Y} = \{0, 1\}$ the label space. We define a scoring model $s: \mathcal{X} \rightarrow \mathbb{R}$ where a larger score $s(X)$ indicates larger confidence towards class 1. We look at the different partitions of the observed data: training data $D_{train} \sim \mathcal{D}$ for training the scoring model, calibration data $D_{cal} \sim \mathcal{D}$ for calibrating the model post-training (similarly to methods like isotonic and logistic calibration) and test data $D_{test} \sim \mathcal{D}$ for evaluation. %In our setting, we assume that that the model has already been trained and use the calibration data similarly to many other post-hoc calibration methods do (e.g. isotonic calibration, logistic calibration...). 
The calibration data $D_{cal} = \{(x_i, y_i)\}_{i=1}^n$ is ordered so that $z_1 \leq z_2 \leq \dots \leq z_n$ where $z_i = s(x_i)$ is the model output score for the instance $x_i$.

\noindent In conclusion, the calibration data is given as follows:

\begin{itemize}[nosep,before=\leavevmode\vspace*{-1\baselineskip}] 
    \item Datapoint vector $\mathbf{x} = (x_1, x_2, \dots, x_n)$
    \item True label vector $\mathbf{y} = (y_1, y_2, \dots, y_n)$
    \item Model output vector $\mathbf{z} = (z_1, z_2, \dots, z_n)$
    \item True unknown calibrated probability vector $\mathbf{c} = (c_1, c_2, \dots, c_n)$ where $c_i = Pr(Y = 1 | z_i)$.
\end{itemize} 

Similarly to classical post-hoc calibration methods, we assume monotonicity between the model output scores $\mathbf{z}$ and the true calibrated values, meaning that $c_1 \leq c_2 \leq \dots \leq c_n$ as well. This is to say that we trust the model's ordering of scores but not their values. The aim of both calibration and cautious calibration methods is to learn the estimates $\mathbf{\underline{\hat{c}}} = (\underline{\hat{c}}_1, \underline{\hat{c}}_2, \dots, \underline{\hat{c}}_n)$ for the true calibrated probabilities based on the true label vector $\mathbf{y}$. However, while traditional methods aim for accurate average estimates, cautious calibration opts for consistently lower estimates to avoid overconfidence (while still avoiding the trivial, but useless case of $\mathbf{\underline{\hat{c}}}=(0,\dots,0)$).

\subsection{Motivation}
\label{sec:motivation}

As mentioned in Section \ref{sec:relatedwork}, Clopper-Pearson intervals are intuitively fitting for cautiousness since they provide conservative confidence intervals with probabilistic guarantees \citep{brown2001interval}. However, these intervals are traditionally applied to a binary vector generated from a Bernoulli process with a constant probability parameter, which in the remark of Definition \ref{def:hetbernoulli} we term as a \emph{homogeneous Bernoulli vector}. This differs from the calibration scenario, where each label in a calibration set is derived from a Bernoulli trial with a unique parameter \(c_k\), defining this as a \emph{heterogeneous Bernoulli vector} in Definition \ref{def:hetbernoulli}.
 Calibration data labels $\mathbf{y}$ or a subsequence of them is a heterogeneous Bernoulli vector, where the underlying probabilities are monotonically increasing.

\begin{definition}
Let $\mathbf{p} = (p_1, p_2, \dots, p_m)$ with each $p_i \in [0,1]$ for $i = 1,\dots, m$. If given a random vector $S^\mathbf{p} = (S^\mathbf{p}_1, S^\mathbf{p}_2, \dots, S^\mathbf{p}_m)$, where $S^\mathbf{p}_i \sim Bernoulli(p_i)$, then we call $S^\mathbf{\mathbf{p}}$ a \textbf{heterogeneous Bernoulli vector} and denote $S^\mathbf{\mathbf{p}} \sim HBernoulli(\mathbf{p})$.
\label{def:hetbernoulli}
\end{definition}

\begin{remark}
For ease of reference, if $p_1 = p_2 =\dots = p_m$, we call the heterogeneous Bernoulli vector a \textbf{homogeneous Bernoulli vector}.
\label{remark_homog}
\end{remark}

\begin{figure}[t]
\centering
\includegraphics[width=8.5cm]{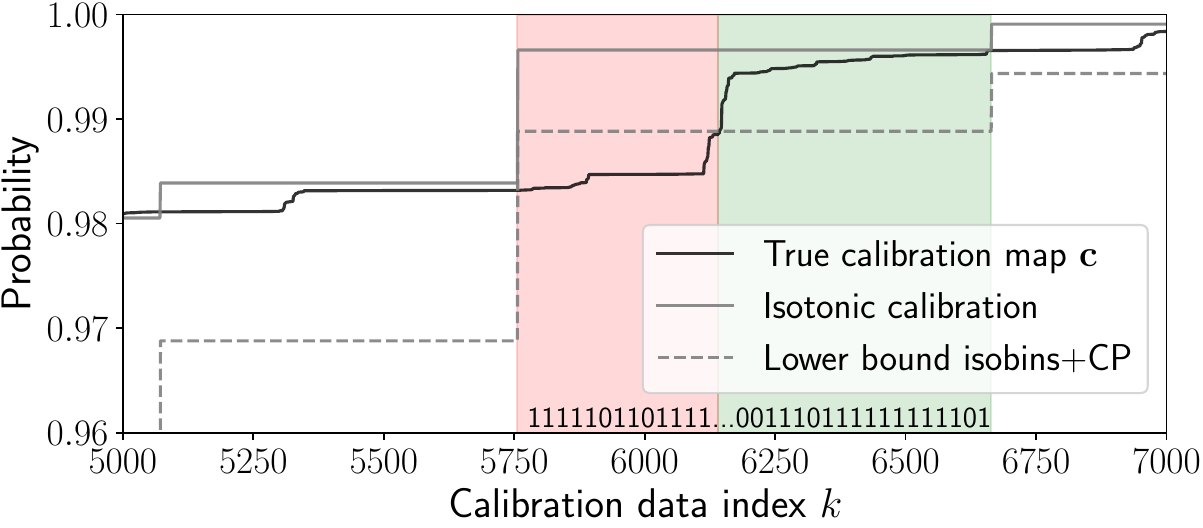}
\caption{A true calibration map (black), isotonic calibration estimate (grey) and a lower bound estimate based on isotonic calibration (dashed grey). The green area shows where the lower bound method holds. The red area shows where the lower bounds are incorrect.}
\label{fig:isobins-cp-problem}
\vspace{18pt}
\end{figure}

A problem arises when treating a subsequence of calibration data labels \(\mathbf{y}_{k-m+1,\dots,k}\) as a homogeneous Bernoulli vector and applying the same lower bound to every element in that subsequence. This is illustrated in Figure \ref{fig:isobins-cp-problem}. The highlighted region represents one isotonic calibration bin. We can see that the true calibrated values (black) are increasing inside the bin, but both isotonic calibration (grey) and the corresponding lower bound calculation (dashed grey) will assign equal values to all elements in the bin. This bound will hold for the rightmost element, as will be proved later, and very often also holds for other right side elements of the bin (green region), but will often not hold for the left side elements of the bin (red region), where the lower bound exceeds the true calibrated probability $\underline{\hat{c}}_k > c_k$. This demonstrates the weakness of assigning the same lower bound to an entire bin and motivates our approach: a bin can be used only to calculate the lower bound of it's rightmost element.

Our work advances cautious calibration in two ways. First, we show the correct way of selecting subsequences for lower bound estimation. This ensures that when using methods that rely on inverted hypothesis testing \citep{berger2001chapter9}, like Clopper-Pearson, the results will be conservative and with certain probabilistic guarantees. As the second contribution, we prove that in addition to the Clopper-Pearson interval, there is a wider set of statistic functions to be used with the inverted hypothesis testing approach, which also produce lower bound estimates with similar properties. Clopper-Pearson intervals are the most simple and computationally efficient example of those.

\subsection{Lower Bound Calculation}

\textbf{Selecting Label Subsequences for Cautious Calibration.} As discussed in the previous section, assigning equal lower bound estimates to all elements in a bin can result in estimates that are overconfident. We prove that if we only use a subsequence of labels preceding an element $k$ to calculate the lower bound $\underline{\hat{c}}_k$, then we can probabilistically guarantee that the bound is correct. This will be shown in the end of this subsection in Theorem \ref{theorem:guarantees}.

\begin{figure}[t]
  \centering
  \includegraphics[width=8.5cm]{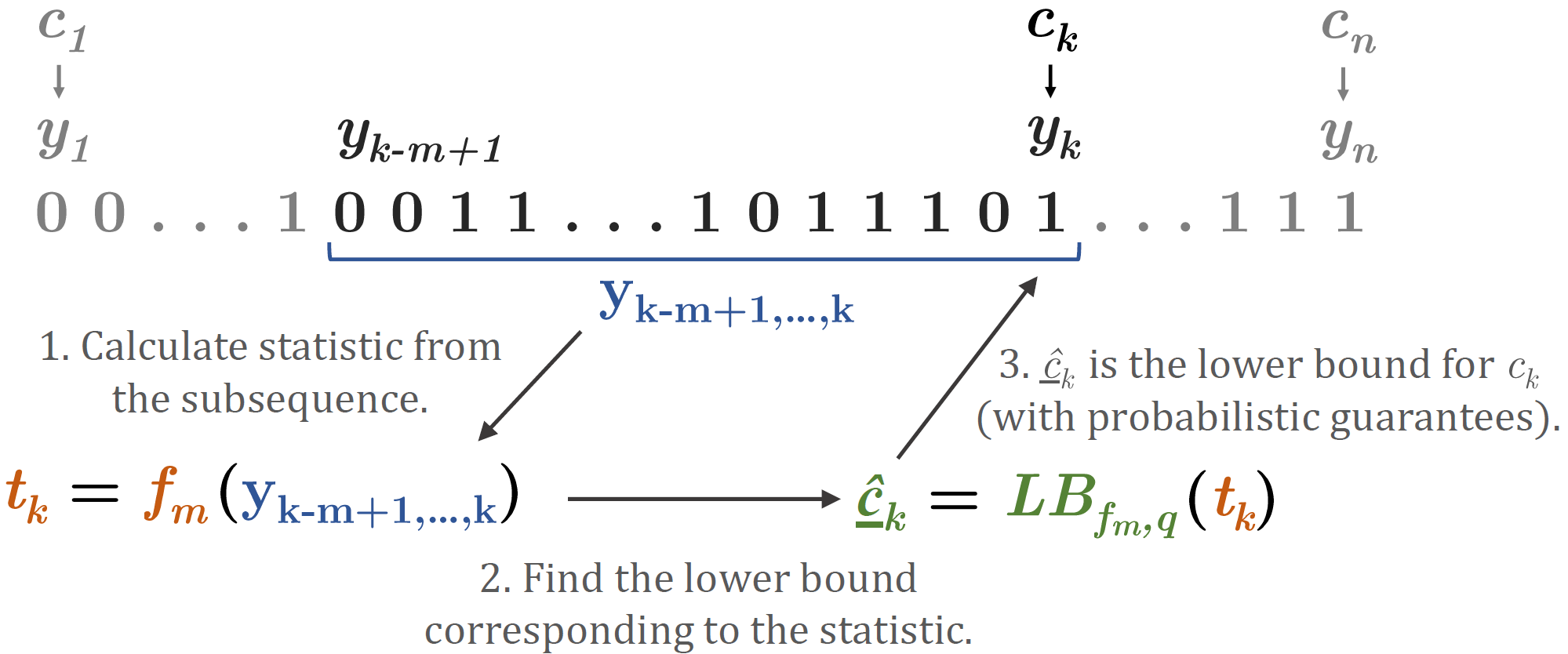}
  \caption{Lower bound $\underline{\hat{c}}_k$ calculation for the true calibrated probability $c_k$. Step 1 calculates a statistic from the binary subsequence $\mathbf{y_{k-m+1,\dots,k}}$. Step 2 finds the lower bound $\underline{\hat{c}}_k$ based on statistic $t_k$. Step 3 assigns $\underline{\hat{c}}_k$ as the lower bound for position $k$.}
  \vspace{18pt}
  \label{fig:calibration_data_lb}
\end{figure}

Figure \ref{fig:calibration_data_lb} demonstrates the process of calculating a lower bound $\underline{\hat{c}}_k$ for a true probability $c_k$. First we select a subsequence of labels \( \mathbf{y}_{k-m+1,\dots,k} \) of length \(m\) preceding $k$. We then calculate a statistic $t_k$ with a function $f_m$ that assigns a real value to a binary vector of length $m$ (for the sake of clarity, we will refer to any function calculating a statistic as a \textbf{statistic function} throughout this article). In the case of Clopper-Pearson intervals, this function $f_m$ is the sum of ones in the selected subsequence. The statistic is then mapped to the corresponding lower bound estimate $\underline{\hat{c}}_k$, calculated with the lower bound function $LB_{f_m, q}$, which will be defined later. As every $k$ has its own preceding subsequence, lower bounds are calculated separately for each \(k\) from its respective subsequence. Meaning that $\underline{\hat{c}}_k$ is a function of a subsequence of labels $y_{k-m+1,\dots,k}$, where $1\leq k-m +1 < k \leq n$ and $k,m\in\mathbb{N}$. The algorithm for calculating a lower map for one calibration set is described in Supplementary \ref{algo}.

\textbf{Inverted Hypothesis Testing for Lower Bound Estimation.}  Before explaining which type of guarantees this approach holds, we need to understand the process of lower bound calculation. We will start by explaining the idea behind one-sided Clopper-Pearson intervals \citep{cai2005one} as an example of inverted hypothesis testing for interval calculation. Then, we generalize this approach to work with a wider set of statistic functions.

Suppose we have a homogeneous Bernoulli vector with parameter $p$. In the Clopper-Pearson case, we calculate the sum function statistic on this random vector, which leads to a binomial distribution. This means we can form and test a hypothesis about $p$ using the binomial test. Unlike classical hypothesis testing, which fixes one certain value of $p'$ for the hypothesis (e.g. $H_0$: $p < p'$) and tests it once, inverted testing evaluates every possible $p' \in [0, 1]$ the same way. All values of $p'$ where the null hypothesis is rejected form an interval $[0, \underline{p}']$ of values, for which the alternative hypothesis ($H_1$: $p \geq p'$) is supported, meaning that the true parameter is likely to be larger than these values. The maximum of these values $\underline{p}'$ is used as a lower bound estimate.

The previous example used homogeneous Bernoulli vectors for calculations, but real-world calibration involves heterogeneous Bernoulli vectors. In theory, it should be possible to construct a new type of confidence interval on these vectors using an inverted hypothesis testing approach, but it would be impractical due to the excessive number of potential hypotheses. Nevertheless, as will be confirmed in Theorem \ref{theorem:guarantees}, constructing our lower bound calculation method on homogeneous Bernoulli vectors and applying it correctly on heterogeneous Bernoulli vectors (taking only left subsequences) provides conservative lower bounds with probabilistic guarantees.

Still, as the homogeneous Bernoulli vector intervals are not tailored for heterogeneous ones, using the standard Clopper-Pearson version might not be the only or the best way for lower bound estimation. We cannot change the homogeneous/heterogeneous approach, but we can exchange the sum statistic function for another type of statistic. For example, we could consider a new recursive statistic function that examines all smaller subsequences than a given size \(m\), calculates their lower bounds, and selects the maximum value as the final statistic. This approach can avoid some of the problems caused by fixing just a single $m$, as that might not be the subsequence giving the best or largest lower bound.

Motivated by this, we generalize the inverted hypothesis testing method to work with all statistic functions that are \textbf{monotonic w.r.t 0$\,\shortrightarrow$1 bit-flipping} (definition in Supplementary \ref{proofs}). These are a group of functions $f_m: \{0, 1\}^m \rightarrow \mathbb{R}$ that maintain or increase its value when any 0 in its input is flipped to a 1. An example would be the sum function, which returns 2 in the case of vector (0, 1, 1, 0) and 3 in the case of vector (0, 1, 1, 1), where the last 0 is flipped to 1. All such functions can be used instead of the sum function to get different cautious calibration estimations with the same probabilistic guarantees.

\textbf{Lower Bound Calculation and Probabilistic Guarantees.} The lower bound calculation with the inverted hypothesis testing approach is formalized in Definition \ref{def:lowerbound}. Intuitively, the set in the definition consists of the aforementioned set of probabilities $p$ for which we reject the null hypothesis, and, as stated, we choose the maximum of those for our lower bound.
The definition includes the \emph{CDF for the statistic function $f_m$ of a heterogeneous Bernoulli vector $S^{\mathbf{p}}$}, specified in Definition \ref{def:hetbernoullicdf}. This is essentially the way to describe our null distributions, as it represents the cumulative distribution function for our chosen statistic, given that the data is generated with the underlying probability $p$.

\begin{definition}
We define a \textbf{CDF for the statistic function $f_m$ of the heterogeneous Bernoulli vector  $S^\mathbf{p} \sim HBernoulli(\mathbf{p})$} as $$F_{f_m,\mathbf{p}}(t) := Pr(f_m(S^\mathbf{p}) < t),$$ where $t \in \mathbb{R}$ and $\mathbf{p} \in [0,1]^m$.
\label{def:hetbernoullicdf}
\end{definition}

\begin{definition}
We define a \textbf{lower bound function} for any $t\in\mathbb{R}$, a fixed probability level $q$ and statistic function $f_m$ as 
\begin{align*}
    LB_{f_m,q}(t) :&= \max\,\{\,p\, |\,F_{f_m,\mathbf{p}}(t) \geq q ,\,  \mathbf{p} = (p, \dots, p), p\in[0,1] \}
   % &=  \max\,\{\,p\, |\,Pr(f_m(S^\mathbf{p}) < t) \geq q ,\,  \mathbf{p} = (p, \dots, p), p\in[0,1] \}.
\end{align*}

\label{def:lowerbound}
\end{definition}

The bounds derived through inverted hypothesis testing are valuable for cautious calibration because they are conservative and provide probabilistic guarantees linked to hypothesis testing. These guarantees can be expressed as follows: Given a confidence level \( q \), if the true calibrated value \( c \) were lower than our estimated lower bound \( \underline{\hat{c}} \), then the probability of observing a statistic as high or higher from our data distribution \( D^{\mathbf{c}} \) would be at most \( 1 - q \). This guarantee applies to any statistic function that is monotonic with respect to 0$\,\shortrightarrow$1 bit-flipping. This concept is rigorously outlined in Theorem \ref{theorem:guarantees}, with a detailed proof available in Supplementary \ref{proofs}.

\begin{theorem}
\label{theorem:guarantees}
Let $\mathbf{c}=(c_1,\dots,c_m)$ be a monotonic probability vector and let $D^{\mathbf{c}} \sim HBernoulli(\mathbf{c})$ be a heterogeneous Bernoulli vector of length $m$. If statistic function $f_m$ is monotonic w.r.t. 0$\,\shortrightarrow$1 bit-flipping then for any fixed probability level $q\in[0,1]$ and any statistic value $t\in\mathbb{R}$ it holds that:
$$\Pr\,[\,f_m(D^{\mathbf{c}}) \geq t \,|\, c_m < LB_{f_m,q}(t)\,] \leq 1 - q.$$ 
\end{theorem}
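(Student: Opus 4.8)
The plan is to reduce the heterogeneous statement to the homogeneous one via a coupling/monotonicity argument, then invoke the definition of $LB_{f_m,q}$ together with the monotonicity of the null CDF in the parameter $p$. First I would fix $t$ and $q$, and set $p^\star := LB_{f_m,q}(t)$; the conditioning event is $c_m < p^\star$. Since $\mathbf{c}$ is monotonic, $c_i \leq c_m < p^\star$ for every $i$, so $\mathbf{c}$ is coordinatewise dominated by the constant vector $\mathbf{p}^\star = (p^\star,\dots,p^\star)$. The key step is then a stochastic-domination lemma: if $f_m$ is monotonic w.r.t.\ $0\shortrightarrow 1$ bit-flipping and $\mathbf{c} \preceq \mathbf{p}^\star$ coordinatewise, then $f_m(D^{\mathbf{c}})$ is stochastically dominated by $f_m(S^{\mathbf{p}^\star})$, i.e.\ $F_{f_m,\mathbf{c}}(t) \geq F_{f_m,\mathbf{p}^\star}(t)$ for all $t$. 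Equivalently, in terms of upper tails, $\Pr[f_m(D^{\mathbf{c}}) \geq t] \leq \Pr[f_m(S^{\mathbf{p}^\star}) \geq t]$.

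To prove that lemma I would use the standard monotone coupling of Bernoulli vectors: draw $U_1,\dots,U_m$ i.i.d.\ uniform on $[0,1]$ and set $D_i := \mathds{1}[U_i < c_i]$ and $S_i := \mathds{1}[U_i < p^\star]$; then $D_i \leq S_i$ almost surely since $c_i \leq p^\star$, so $S$ is obtained from $D$ by flipping some $0$s to $1$s. By the bit-flipping monotonicity assumption, $f_m(D) \leq f_m(S)$ pointwise on this coupling, which yields the stochastic domination after taking expectations of indicators. Next I would bring in the definition of the lower bound: $p^\star = LB_{f_m,q}(t)$ is the \emph{maximum} $p$ with $F_{f_m,(p,\dots,p)}(t) \geq q$. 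Here I need a small continuity/monotonicity observation — that $p \mapsto F_{f_m,(p,\dots,p)}(t)$ is non-increasing (again a coupling argument, flipping bits as $p$ increases) and right-continuous enough that the maximum is attained and satisfies $F_{f_m,\mathbf{p}^\star}(t) \geq q$. Granting this, $\Pr[f_m(S^{\mathbf{p}^\star}) \geq t] = 1 - \Pr[f_m(S^{\mathbf{p}^\star}) < t] = 1 - F_{f_m,\mathbf{p}^\star}(t) \leq 1 - q$.

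Combining the two inequalities gives $\Pr[f_m(D^{\mathbf{c}}) \geq t] \leq \Pr[f_m(S^{\mathbf{p}^\star}) \geq t] \leq 1-q$ on the event $c_m < p^\star$, which is exactly the claimed bound (the conditioning is just restricting to that event, and the bound $\leq 1-q$ holds for every $\mathbf{c}$ in that event, hence conditionally). I would close by noting that monotonicity of $\mathbf{c}$ is used only to pass from $c_m < p^\star$ to $c_i < p^\star$ for all $i$; without it one would instead need to assume $\max_i c_i < p^\star$.

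I expect the main obstacle to be the careful handling of the $\max$ in Definition~\ref{def:lowerbound} — specifically, establishing that the supremum is attained and that $F_{f_m,\mathbf{p}^\star}(t) \geq q$ holds at the maximizer. Because $F_{f_m,\mathbf{p}}(t) = \Pr(f_m(S^{\mathbf{p}}) < t)$ involves a strict inequality and $f_m$ takes finitely many values, $p \mapsto F_{f_m,(p,\dots,p)}(t)$ is a step-like (piecewise polynomial with possible jumps) non-increasing function, so one must check it is upper semi-continuous from the correct side for the defining set $\{p : F \geq q\}$ to be closed; the bit-flipping monotonicity is what guarantees the needed monotonicity of $F$ in $p$. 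The stochastic-domination lemma itself is routine once the coupling is written down, so the bookkeeping around the extremal definition is where I would be most careful.
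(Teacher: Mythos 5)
Your proposal is correct, and at the top level it follows the same skeleton as the paper's proof: set $p^\star = LB_{f_m,q}(t)$, use monotonicity of $\mathbf{c}$ to pass from $c_m < p^\star$ to coordinatewise dominance by the homogeneous vector $(p^\star,\dots,p^\star)$, invoke a CDF-domination result for bit-flip-monotone statistics, and read off $F_{f_m,(p^\star,\dots,p^\star)}(t)\geq q$ from the definition of the lower bound to conclude $\Pr[f_m(D^{\mathbf{c}})\geq t]\leq 1-q$. Where you genuinely diverge is in how the domination step is established. The paper proves it (its Proposition on CDFs) by a coordinate-by-coordinate interpolation: it constructs hybrid vectors $\mathbf{\tilde{p}}^{(i)}$ differing in a single coordinate, decomposes each CDF as a sum over binary vectors $\omega$ with a $0$ in position $i$ paired with $g_i(\omega)$, and reduces the claim to the indicator inequality $\mathds{1}_{\{f_m(\omega)<t\}} \geq \mathds{1}_{\{f_m(g_i(\omega))<t\}}$. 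You instead use the monotone coupling $D_i = \mathds{1}[U_i < c_i]$, $S_i = \mathds{1}[U_i < p^\star]$ with common uniforms, so that $S$ arises from $D$ by flipping some $0$s to $1$s and $f_m(D)\leq f_m(S)$ pointwise (this needs iterating the single-flip monotonicity over several flips, a one-line induction you should state). The coupling argument is shorter and, as you observe, needs only coordinatewise dominance rather than monotonicity of the probability vectors; the paper's algebraic interpolation has the advantage of staying entirely within the CDF formalism it reuses elsewhere (e.g.\ in proving that $LB_{f_j^{\text{sum}},q}$ is monotone for the max-cp statistic). One small correction to your worry about the extremal definition: for fixed $t$ the map $p \mapsto F_{f_m,(p,\dots,p)}(t)$ is a polynomial in $p$ (the discontinuities of the distribution of $f_m$ occur in $t$, not in $p$), so the set $\{p : F_{f_m,(p,\dots,p)}(t)\geq q\}$ is closed, the maximum is attained, and $F_{f_m,\mathbf{p}^\star}(t)\geq q$ holds at the maximizer — a point the paper simply asserts from the definition, and which your non-increasingness-in-$p$ observation already covers.
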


To illustrate with a specific example, let's consider using the sum function \( f_m \) with a window length \( m = 1000 \) and a confidence level of \( q = 0.99 \). Suppose in our data we observe a sequence consisting of $999$ ones and $1$ zero, yielding a statistic \( t = 999 \). We can then estimate a lower bound \( \underline{\hat{c}} \approx0.993 \). With this, we would assert that if the actual calibrated probability \( c \) would be below $0.993$, then the probability of seeing a statistic as high or higher than $999$ would be less than $0.01$. Therefore, it is highly probable that the true value of \( c \) is at least \( \underline{\hat{c}} \), confirming that we have a reliable lower bound estimation for this scenario.

\subsection{Practical Calculation of Lower Bounds}

We will be using two different statistic functions for lower bound calculation. Given an arbitrary binary vector $y\in \{0,1\}^m$ and $m\in\mathbb{N}$, the first is the sum function $f_m^{\text{sum}}(y) := \Sigma_{i=1}^m y_i$, making the applied method equivalent to the Clopper-Pearson method. The other is called max-cp, namely 
$$
    f_{m_1, m_2}^{\text{max-cp}}(y)\, {=}\max\{LB_{f_j^{\text{sum}},q}(f_j^{\text{sum}}(y_{m_2-j+1,\dots,m_2})),m_1\,{ \leq }\,j\, {\leq}\, m_2\}
$$
where the max-cp statistic function looks at all subsequences from size $m_1$ to $m_2$ ($m_1 \leq m_2$) ending at the last element, calculates Clopper-Pearson lower bounds for all and takes the maximum of those as the estimate. As proven in Supplementary \ref{proofs}, both statistic functions are monotonic w.r.t. 0$\,\shortrightarrow$1 bit-flipping, making them suitable for estimating lower bounds with probabilistic guarantees.

We will describe the practical calculation of both of these bounds. These approaches differ in many aspects, since for the easier sum statistic functions, the used CDFs can be described analytically, making the computations exact and fast. For the max-cp statistic we need to calculate approximate CDFs with simulations, as the distribution is much more complex.

For both methods, we employ the left subsequence approach with inverted hypothesis testing for lower bound calculation. For a clearer understanding of the experiment results, we introduce the following acronyms: HTLB (Hypothesis Testing Lower Bounds) denotes the general method using one-sided inverted hypothesis testing confidence intervals on left subsequences, HTLB+CP refers to the Clopper-Pearson a.k.a. sum statistic function, and HTLB+MAXCP applies to the max-cp statistic.

\textbf{Practical Calculation of HTLB+CP Lower Bounds.} Calculating the lower bound function for the sum statistic is straightforward due to the direct relationship between the Beta and Binomial distributions in case of Clopper-Pearson intervals \citep{brown2001interval}. The lower bound function is defined as:
$$LB_{f_m^{\text{sum}}, q} = q\text{'th quantile of } Beta(\alpha=t, \beta=m - t + 1),$$
where $t$ is our sum statistic, i.e., the number of 1s in our subsequence, $m$ is the subsequence length, making $m - t$ equal to the nr of 0s in the sequence. This quantile can be easily obtained with the percent-point function of the same Beta distribution.

\textbf{Practical Calculation of HTLB+MAXCP Lower Bounds.} The max-cp statistic function, being more complex, lacks an analytical solution for direct calculation. This means that we must create our own empirical distributions for the inverted hypothesis testing for as many $p$ values as we can and precalculate a mapping between statistic values and corresponding lower bounds. The precalculation has to be done for each $m_1$ and $m_2$ pair separately, which is computationally intensive. However, once the precalculation exists, the using of the lower bounds only requires using the existing mapping. The main steps for calculating a lower bound mapping for $m_1$, $m_2$, $q$ and $f_{m_1, m_2}^{\text{max-cp}}$ are as follows:

\begin{enumerate}
    \item We select as many values of $p$ as we are able to use for calculations.
    \item For each $p$, we sample homogeneous Bernoulli vectors, calculate statistics with  $f_{m_1, m_2}^{\text{max-cp}}$ on them and end up with an empirical distribution over the statistic values for each $p$.
    \item For each $p$, we find the statistic value that is the $q$'th quantile of its empirical distribution.
    \item We reverse the mapping so that each statistic value is mapped with the minimum $p$ for which it is the $q$'th quantile. \footnote{We take the minimum value that is approximately equal to $q$, which is slightly different from the lower bound definition but adds cautiousness in case we have misestimations due to approximate calculations.}
\end{enumerate}

\section{Experiment setup}
\label{sec:exp_setup}

\subsection{Data generation and evaluation}

On real data, the theoretical guarantees hold, but due to the true calibrated values being unknown, we cannot evaluate or compare the cautiousness of different methods. This means that we have to generate our own data where the ground truth is known. The data generation process consists of two parts: generating true calibration maps and generating calibration sets.

\textbf{True calibration map generation.} The true calibrated probability $c_k = Pr(Y=1|z_k)$ is the probability of observing class 1 when the predicted score is $z_k$. Although theoretically, the true calibration map is continuous, in practice, we always estimate it on our finite calibration set. This is why we represent the true calibration map as a vector of true probabilities $\mathbf{c}$ corresponding to our calibration data set points. As stated before, we assume that there is a monotonic relationship between $z_k$ and $c_k$. We want to generate maps with a broad coverage over different shapes, for which we use a simple recursive algorithm described in \cite{allikivi2019non}. An example of 100 generated maps can be found in Supplementary \ref{supfigs}. The maps are generated for high probabilities between 0.9-1, as stated before, since decision-making in high-risk scenarios takes place only in case of high certainty.

\textbf{Calibration set generation.} Next, we need a calibration set in order to be able to learn an estimated calibration map and compare it with the true one. If the true calibration map is known, the calibration set can be thought of as a realization of a heterogeneous Bernoulli vector with parameter $\mathbf{c}$. This means that to obtain one calibration set $\mathbf{y}$, for each $k$, we sample from $y_k \sim Bernoulli(c_k)$ to obtain a label for the $k$'th element. We can sample multiple different calibration label vectors from one true calibration map, learn estimated calibration maps on them and see the behaviour of the calibration algorithm w.r.t. the true calibrated values.

\subsection{Methods}

\textbf{Calibration methods.} First, we include some classical calibration methods in our experiment that are meant for estimating true calibrated probabilities. These methods include isotonic calibration (\textbf{isocal}) \citep{zadrozny2002transforming}, logistic calibration (\textbf{logcal}) \citep{platt1999probabilistic} and beta calibration (\textbf{betacal}) \citep{kull2017beta}. Label smoothing \citep{goodfellow2016deep} has been applied to isotonic and logistic calibration due to its usefulness in reducing overconfidence. Although these methods don't try to be cautious, including them in comparison will help to understand the difference between cautious and classical approaches in these scenarios.

\textbf{Existing methods repurposed for cautious calibration.} Next, we include methods that are meant for quantifying uncertainty about the predicted probabilities and can be repurposed for cautious calibration. The first method, the only one in this section used directly in its original form, is lower bound estimation from simplified Venn-Abers predictors (\textbf{SVA}) \citep{vovk2012venn}. As written earlier, this method is very close to isotonic calibration but tends to prefer underconfidence. The second method is a combination of two previously mentioned approaches. We take the isotonic calibration binning from \citep{nyberg2021reliably} and the Clopper-Pearson idea from \citep{park2020pac} and combine them into one method we call \textbf{isobins+CP}. The third method is very similar to the last one, where instead of isotonic calibration bins, we use the reliably calibrated isotonic calibration (RCIR) \citep{nyberg2021reliably} bins, where too small bins have been joined to get a more reliable prediction. This is also paired with Clopper-Pearson lower bounds and is called \textbf{RCIR+CP}.

\textbf{Our methods for cautious calibration.} We use our \textbf{HTLB+CP} and \textbf{HTLB+MAXCP} approaches with subsequence size 2000. This choice was made based on empirical evaluation, as it provided quite stable but not overly cautious maps, demonstrated in Supplementary \ref{supfigs}. Small subsequence sizes were excluded since it is not possible to predict high enough lower bounds with too little evidence. Too large sizes can also be overly cautious on our 10K sized datasets. For \textbf{HTLB+MAXCP}, we used subsequence sizes between 100 ($m_1$) and 2000 ($m_2$) in order to exclude the very small subsequences.

\textbf{Runtime and complexity.} Most of the methods we have used for comparison are either linear or loglinear (isocal, logcal, betacal, isobins+CP and RCIR+CP). Our methods run in $O(mn)$, where $n$ is the sequence length and $m$ the subsequence length. In practice, calculating the map with our methods on 10000 instances takes approximately 1-3 seconds, which is a small overhead compared to $<$ 1 second runtime for the linear/loglinear methods. SVA has quadratic complexity and is clearly slower compared to other methods. While running HTLB+MAXCP is fast, it requires precalculation of the lower bound map, which is computationally expensive. Complexity of this precalculation is $O(N_p \cdot N_{seq} \cdot m_2 \cdot k)$ where $N_p$ is the number of different probabilities we want to have the precalculated map for, $N_{seq}$ is the number of sequences we want to sample for our empirical distribution, $m_2$ is the maximum subsequence length and $k$ is the statistic function calculation complexity. From our experiments we observed that taking at least a few hundred or a few thousand for $N_p$ and 10K-100K for $N_{seq}$ gave stable results when the data size was 10000 and $m_2$ was 2000. This can of course vary depending on the data and task. Even though it is computationally expensive, precalculation needed for HTLB+MAXCP method has to be only calculated once and can be parallelized easily, making it realistic to use in practice. The code for all mentioned calibration and cautious calibration methods can be found in Supplementary \ref{code}.

\textbf{Post-processing of learned maps.} There are two ways we post-process the learned maps for a more thorough comparison. Since our method has a theoretical maximum limit for the highest lower bound it can predict (a sequence with 2000 1s will produce a lower bound $\sim 0.9978$ with both of our methods), we will have a \textbf{cut} version of all maps, where the maximum value has been clipped to equal our method's maximum value. This will make the comparison of the methods less dependent on the chosen window size. Another modification applied to all cautious calibration methods is forcing them to be monotonic (\textbf{mono}). Due to how cautious calibration methods work, the monotonicity might not be preserved. We apply a simple and conservative algorithm that makes the maps monotonic by going through the map from right to left (from $k=n$ to $k=1$), and once a value $\underline{\hat{c}}_k$ has been observed, all previous values that exceed it $\underline{\hat{c}}_{k'} > \underline{\hat{c}}_k$ where $k' < k$, will be clipped to be at most $\underline{\hat{c}}_k$. We also look at the combination of cutting and forced monotonicity (\textbf{cut+mono}) for these methods. All aforementioned post-processing methods can only lower the values of the initial map, making the results more conservative. Some methods do not need to be cut as they are already cut by construction and the same applies for monotonicity. Some methods can undergo both cutting and monotonic adjustments, making the result even more cautious. Table \ref{tab:methods} summarizes all methods we have used in our experiments together with the applied post-processing options. Note that the rightmost tick for every method represents the the version that leads to the most conservative result, which will be used in the experiment results when referred to the conservative version of a method.

\begin{table}[b]
\caption{Methods used in the experiments. Methods that are monotonic by construction are marked with -$^{\scriptscriptstyle 1}$, and methods that are bounded by construction are marked with -$^{\scriptscriptstyle 2}$. Label smoothing is represented with \emph{l.s.}}
\vspace{15pt}
\centering
\setlength{\extrarowheight}{1pt}
\setlength{\tabcolsep}{4pt}
\begin{tabular}[htbp]{|c|c|c|c|c|c|}
\hline
\multirow{2}{*}{} & \multirow{2}{*}{Method} & \multicolumn{4}{c|}{Post-processing}\\
\cline{3-6}
& & None & Cut & Mono & \makecell{Cut+Mono} \\
\hline
\hline
\multirow{3}{*}{\makecell{Existing\\methods for\\calibration}} & \makecell{\textbf{isocal }\textit{(l.s.)}} & \checkmark & \checkmark &-$^{\scriptscriptstyle 1}$& -$^{\scriptscriptstyle 1}$\\
\cline{2-6}
& \makecell{\textbf{logcal }\textit{(l.s.)}} & \checkmark & \checkmark &-$^{\scriptscriptstyle 1}$ & -$^{\scriptscriptstyle 1}$\\
\cline{2-6}
& \makecell{\textbf{betacal}} & \checkmark & \checkmark &-$^{\scriptscriptstyle 1}$ & -$^{\scriptscriptstyle 1}$\\
\hline
%\multirow{3}{*}{A} & \multicolumn{2}{c|}{User B} & %
%    \multicolumn{2}{c|}{User C} & \multirow{3}{*}{D}\\
% & \multicolumn{2}{c|}{Value} & \multicolumn{2}{c|}{Value} & \\
%\cline{2-5}
% & B1 & B2 & C1 & C2 & \\
\hline
\multirow{3}{*}{\makecell{Repurposed\\for cautious\\calibration}} & 
\makecell{\textbf{SVA}} & \checkmark & \checkmark & \checkmark & \checkmark \\
\cline{2-6}
& \makecell{\textbf{isobins+CP}} & \checkmark & \checkmark & \checkmark & \checkmark \\
\cline{2-6}
& \makecell{\textbf{RCIR+CP}} & \checkmark & \checkmark & \checkmark & \checkmark \\
\hline
\hline
\multirow{2}{*}{\makecell{Our cautious\\calibration}} & 
\makecell{\textbf{HTLB+CP}} & \checkmark & -$^{\scriptscriptstyle 2}$ & \checkmark & -$^{\scriptscriptstyle 2}$ \\
\cline{2-6}
& \makecell{\textbf{HTLB+MAXCP}} & \checkmark & -$^{\scriptscriptstyle 2}$ & \checkmark & -$^{\scriptscriptstyle 2}$ \\
\cline{2-6}
\hline
% etc. ...
\end{tabular}
\label{tab:methods}
\end{table}
\subsection{Experiment setup and evaluation}

\textbf{Evaluation approach.} We have two central concepts we want to evaluate. First, we measure if the cautious estimates $\mathbf{\underline{\hat{c}}}$ are truly cautious w.r.t. $\mathbf{c}$. We calculate the violation percentage that shows how often the lower bounds are incorrect, i.e. how often $\underline{\hat{c}}_k > c_k$. Secondly, we measure how useful cautious calibration is in our example scenario. We evaluate how risk levels chosen with the imperfect estimations $\underline{\hat{c}}_k$ influence the expected outcome, knowing that the true calibrated probability was $c_k$.

\textbf{Experiment setup.} We use 100 different true calibration maps in our experiments and generate 500 calibration sets for each true map, resulting in 50K generated calibration sets. We use 22 variations of 8 conceptually different methods (see Table \ref{tab:methods}) to learn calibration maps or cautious calibration maps on each of these 50K cases. This sums up to 1.1 million learned maps. Results with detailed descriptions will be introduced in the next section of this paper.

\section{Experiment results}

\textbf{Measuring cautiousness with guarantees.} Our first measurements will validate if the probabilistic guarantees hold for our methods. For the guarantees to hold, the measurements have to be made on independent estimations, which is why we cannot provide guarantees for lower bounds calculated on overlapping subsequences. Thus, we select a random $k$'th position for each of the 50K maps and, for each method, check if the lower bound estimation is wrong. We used $q=0.99$ in our experiments, so in order for the guarantees to hold, the percentage of lower bound violations has to be less than 1\%.
Table \ref{tab:guaranteeresults} shows that guarantees hold for our methods, even when they don't have conservative post-processing applied to them. Isobins+CP is also performing well after post-processing, but still, it has a disadvantage due to the aforementioned problems with subsequence choosing. All other methods, especially classical calibration methods, are very frequently non-cautious.

\begin{table}[b]
\caption{Incorrect lower bound percentages over 50K independent estimations. Every estimation is just a single (random) position from the calibration map to ensure independence between estimations.}
\vspace{15pt}
\centering
\setlength{\extrarowheight}{1pt}
\setlength{\tabcolsep}{4pt}
\begin{tabular}[htbp]{|c|c|c|}
\hline
\multirow{2}{*}{Method} & \multicolumn{2}{c|}{Post-processing}\\
\cline{2-3}
& Conservative & None \\
\hline
\hline
\textbf{HTLB+CP} & \textbf{0.0067\%} & \textbf{0.1867\%} \\
\hline
\textbf{HTLB+MAXCP} & \textbf{0.0067\%} & \textbf{0.6311\%} \\
\hline
\textbf{isobins+CP} & \textbf{0.6756\%} & 2.0133\% \\
\hline
\textbf{RCIR+CP} & 8.4900\% & 8.9233\% \\
\hline
\textbf{SVA} & 38.6333\% & 40.2889\% \\
\hline
\textbf{betacal} & 46.3333\% & 51.1489\% \\
\hline
\textbf{isocal} & 49.1000\% & 55.3489\% \\
\hline
\textbf{logcal} & 49.3778\% & 50.3511\% \\
\hline
% etc. ...
\end{tabular}

\label{tab:guaranteeresults}
\end{table}

\textbf{Measuring cautiousness without guarantees.} Looking at independent estimates provides guarantees, but practically, we are more interested in the violation percentages within one learned map. In that case, the independence assumption doesn't hold, and no guarantees can be given. The intuitive reason is that we cannot guarantee or protect against getting a very unfortunate calibration set (e.g. worst case, all 1s), so we cannot guarantee a violation percentage < 1\% for all learned maps. Even so, we can still measure the violation percentage within all 50K maps and see if our more cautious approach is empirically giving better results. These results are shown in Figure \ref{fig:violations_across_maps}, where the results for the most conservative version of each method are shown \footnote{This is done since our method would have an even bigger advantage if we did not apply post-processing}. Our methods are clearly the most cautious, with more than 99\% of learned maps having 0 violations, followed by  isocal+CP method. There were some outliers, even with our methods, where up to 17\% of violations occurred within a single map. But compared to other methods, this is still considerably more cautious.

\begin{figure}[t]
  \centering
  \hspace*{-0.5cm}
  \includegraphics[width=8cm]{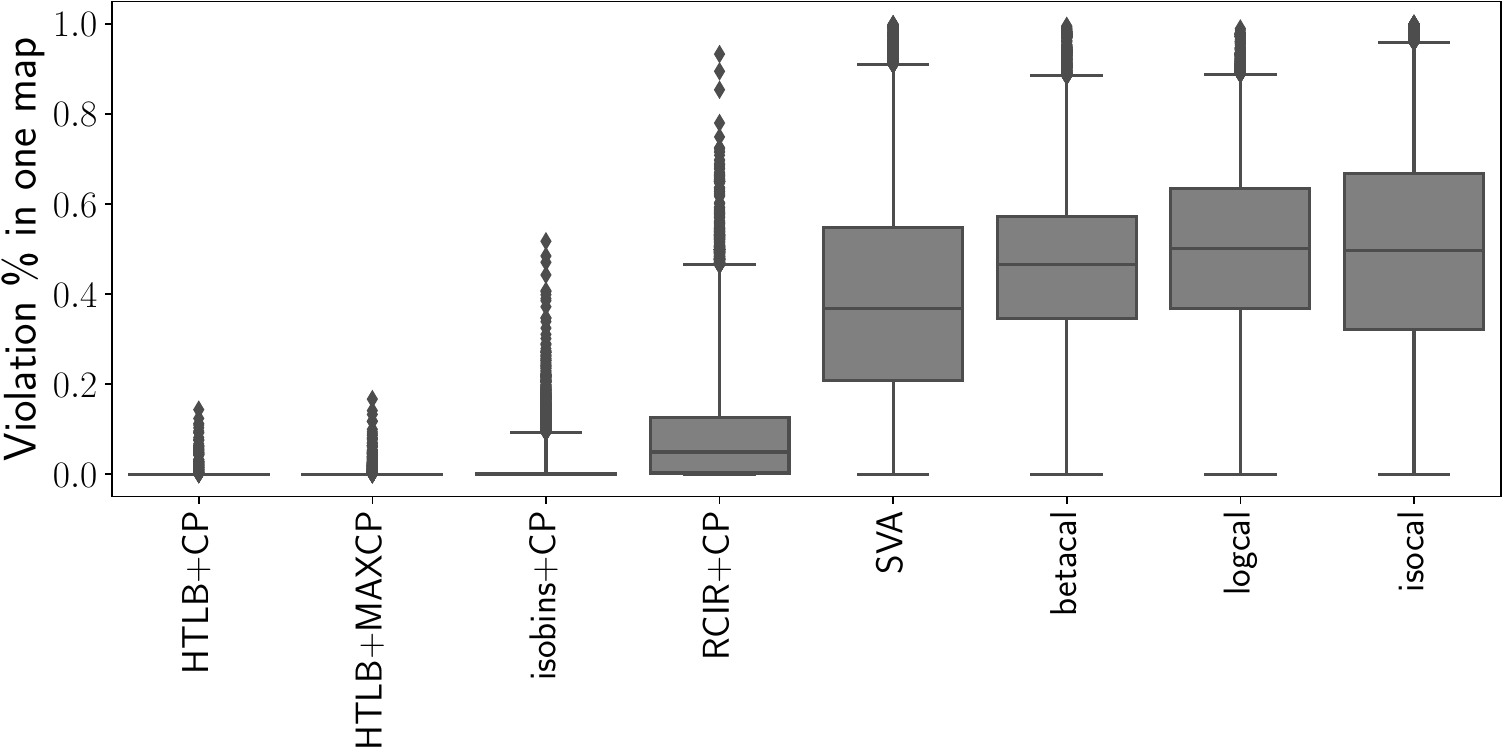}
  \caption{Boxplots for the violation percentages in each of the 50K learned maps for all methods (conservative versions).}
  \vspace{18pt}
  \label{fig:violations_across_maps}
\end{figure}

\textbf{Measuring expected outcome.} The reason for having cautious estimates is to avoid big negative outcomes for all score groups in high-risk decision-making tasks. We measure the effect of cautious calibration on optimal risk level selection scenario, where the risk level is selected based on our estimates $\underline{\hat{c}}_k$ and the expected outcome is calculated, knowing that the true calibrated probability is actually $c_k$. As our goal is to have good expected outcomes for all score groups, then, in principle, we want to find the worst expected outcome in every map and make sure that it's not an extremely negative value. Since the worst value is very dependent on how much data we have used in our experiments, we opt for the 1st percentile worst outcome in our comparisons. Then we can make claims such as 1\% of score groups will have worse results than our reported 1st percentile expected outcome. Figure \ref{fig:outcome_results} shows the results for this, where higher values in the plot stand for better "bad cases". Our methods only have non-negative results on this plot, meaning that even if there are some negative outcomes, they appear rarely. MAXCP approach is outperforming the CP approach, hinting that while still cautious, MAXCP might give estimates closer to the true calibration line. An interesting observation can be made about logcal, betacal, SVA and isobins+CP. While the latter was outperforming logcal, betacal and SVA in cautiousness, it seems like it doesn't only matter how often you make mistakes but also where the mistakes are. It might just be that the problematic subsequence selection in isobins+CP leads to just a few incorrect lower bounds, but once they happen, the amount of overestimation is very large and can cause large negative expected costs. Other methods benefit largely from the cut post-processing, as without cutting, they make many overconfident mistakes in the high-probability region. In Supplementary \ref{supfigs}, similar results for the mean outcome are provided, showing that, as expected, cautiousness reduces the mean outcome. This is the trade-off for providing acceptable results for each score group vs having the best outcome on average.

\begin{figure}[t]
  \centering
  \hspace*{-0.1cm}
  \includegraphics[width=8cm]{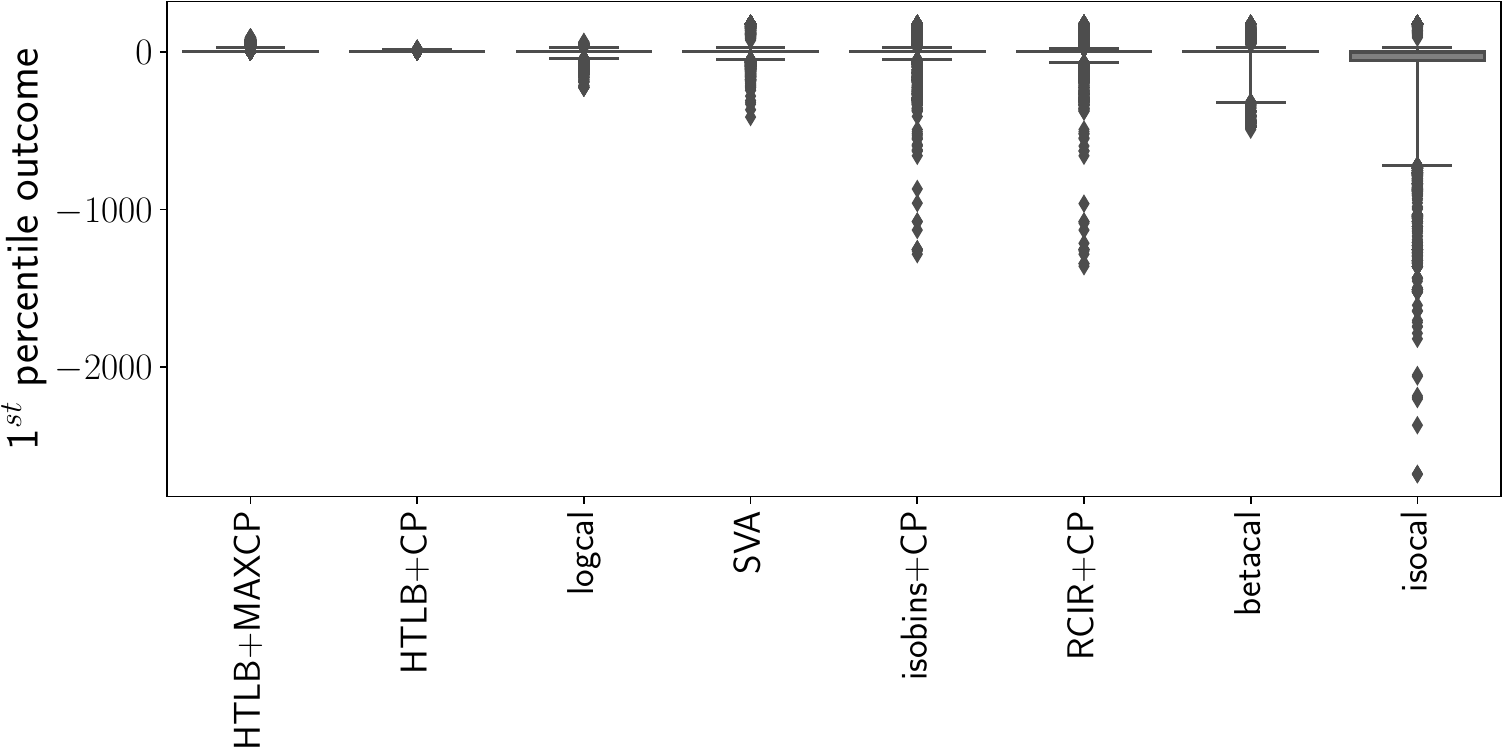}
  \caption{Boxplots for the 1st percentiles of expected outcomes in each of the 50K learned maps for all methods (conservative versions).}
  \vspace{18pt}
  \label{fig:outcome_results}
\end{figure}

\section{Discussion and Conclusion}

This work introduces a novel concept of cautious calibration, which is useful when the aim is to avoid extremely negative outcomes for all score groups consistently.
We have proposed our own theoretically justified approach for cautious calibration and compared it against other calibration methods, as well as existing methods, that are combined or repurposed for cautious calibration.
We demonstrate that our methods are the most cautious ones in terms of correct lower bounds and are useful in avoiding extremely low expected costs in the example scenario.
There is a potential for further improvements in the future, by moving from user-provided subsequence sizes to adaptable sizes, and finding more choices for the statistic function that would lead to analytic distributions, supporting fast computations like in the HTLB+CP approach.
In summary, our research emphasizes the importance of looking further from aggregated performance measures towards models that are more trustworthy and, thus, better usable. Cautious calibration is one contribution towards this goal. Moving forward, we aim to identify scenarios where cautiousness is beneficial and develop methods most suitable for these contexts.

\begin{ack}
This work was supported by the Estonian Research Council un- der grant PRG1604, by the Estonian Centre of Excellence in Artificial Intelligence (EXAI), funded by the Estonian Ministry of Education and Research, and co-funded by the European Union and Estonian Research Council via project TEM-TA119.
\end{ack}

%%%%%%%%%%%%%%%%%%%%%%%%%%%%%%%%%%%%%%%%%%%%%%%%%%%%%%%%%%%%%%%%%%%%%%%%

%%% Use this command to include your bibliography file.

\newpage

\bibliography{mybibfile}

\begin{thebibliography}{17}
\providecommand{\natexlab}[1]{#1}
\providecommand{\url}[1]{\texttt{#1}}
\expandafter\ifx\csname urlstyle\endcsname\relax
  \providecommand{\doi}[1]{doi: #1}\else
  \providecommand{\doi}{doi: \begingroup \urlstyle{rm}\Url}\fi

\bibitem[Allikivi and Kull(2019)]{allikivi2019non}
M.-L. Allikivi and M.~Kull.
\newblock Non-parametric bayesian isotonic calibration: Fighting over-confidence in binary classification.
\newblock In \emph{Joint European Conference on Machine Learning and Knowledge Discovery in Databases}, pages 103--120. Springer, 2019.

\bibitem[Bai et~al.(2021)Bai, Mei, Wang, and Xiong]{bai2021don}
Y.~Bai, S.~Mei, H.~Wang, and C.~Xiong.
\newblock Don’t just blame over-parametrization for over-confidence: Theoretical analysis of calibration in binary classification.
\newblock In \emph{International conference on machine learning}, pages 566--576. PMLR, 2021.

\bibitem[Berger and Casella(2001)]{berger2001chapter9}
R.~L. Berger and G.~Casella.
\newblock \emph{Chapter 9: Interval Estimation}, pages 417--463.
\newblock Duxbury, 2001.

\bibitem[Brown et~al.(2001)Brown, Cai, and DasGupta]{brown2001interval}
L.~D. Brown, T.~T. Cai, and A.~DasGupta.
\newblock Interval estimation for a binomial proportion.
\newblock \emph{Statistical science}, 16\penalty0 (2):\penalty0 101--133, 2001.

\bibitem[Cai(2005)]{cai2005one}
T.~T. Cai.
\newblock One-sided confidence intervals in discrete distributions.
\newblock \emph{Journal of Statistical planning and inference}, 131\penalty0 (1):\penalty0 63--88, 2005.

\bibitem[Clopper and Pearson(1934)]{clopper1934use}
C.~J. Clopper and E.~S. Pearson.
\newblock The use of confidence or fiducial limits illustrated in the case of the binomial.
\newblock \emph{Biometrika}, 26\penalty0 (4):\penalty0 404--413, 1934.

\bibitem[Goodfellow et~al.(2016)Goodfellow, Bengio, and Courville]{goodfellow2016deep}
I.~Goodfellow, Y.~Bengio, and A.~Courville.
\newblock \emph{Deep learning}.
\newblock MIT press, 2016.

\bibitem[Guo et~al.(2017)Guo, Pleiss, Sun, and Weinberger]{guo2017calibration}
C.~Guo, G.~Pleiss, Y.~Sun, and K.~Q. Weinberger.
\newblock On calibration of modern neural networks, 2017.

\bibitem[Kull et~al.(2017)Kull, Silva~Filho, and Flach]{kull2017beta}
M.~Kull, T.~Silva~Filho, and P.~Flach.
\newblock Beta calibration: a well-founded and easily implemented improvement on logistic calibration for binary classifiers.
\newblock In \emph{Artificial intelligence and statistics}, pages 623--631. PMLR, 2017.

\bibitem[Kull et~al.(2019)Kull, Perello~Nieto, K{\"a}ngsepp, Silva~Filho, Song, and Flach]{kull2019beyond}
M.~Kull, M.~Perello~Nieto, M.~K{\"a}ngsepp, T.~Silva~Filho, H.~Song, and P.~Flach.
\newblock Beyond temperature scaling: Obtaining well-calibrated multi-class probabilities with dirichlet calibration.
\newblock \emph{Advances in neural information processing systems}, 32, 2019.

\bibitem[Nyberg and Klami(2021)]{nyberg2021reliably}
O.~Nyberg and A.~Klami.
\newblock Reliably calibrated isotonic regression.
\newblock In \emph{Pacific-Asia Conference on Knowledge Discovery and Data Mining}, pages 578--589. Springer, 2021.

\bibitem[Ovadia et~al.(2019)Ovadia, Fertig, Ren, Nado, Sculley, Nowozin, Dillon, Lakshminarayanan, and Snoek]{ovadia_can_2019}
Y.~Ovadia, E.~Fertig, J.~Ren, Z.~Nado, D.~Sculley, S.~Nowozin, J.~Dillon, B.~Lakshminarayanan, and J.~Snoek.
\newblock Can you trust your model's uncertainty? evaluating predictive uncertainty under dataset shift.
\newblock \emph{Advances in neural information processing systems}, 32, 2019.

\bibitem[Park et~al.(2020)Park, Li, Lee, and Bastani]{park2020pac}
S.~Park, S.~Li, I.~Lee, and O.~Bastani.
\newblock Pac confidence predictions for deep neural network classifiers.
\newblock \emph{arXiv preprint arXiv:2011.00716}, 2020.

\bibitem[Platt et~al.(1999)]{platt1999probabilistic}
J.~Platt et~al.
\newblock Probabilistic outputs for support vector machines and comparisons to regularized likelihood methods.
\newblock \emph{Advances in large margin classifiers}, 10\penalty0 (3):\penalty0 61--74, 1999.

\bibitem[Vovk and Petej(2012)]{vovk2012venn}
V.~Vovk and I.~Petej.
\newblock Venn-abers predictors.
\newblock \emph{arXiv preprint arXiv:1211.0025}, 2012.

\bibitem[Zadrozny and Elkan(2001)]{zadrozny2001obtaining}
B.~Zadrozny and C.~Elkan.
\newblock Obtaining calibrated probability estimates from decision trees and naive bayesian classifiers.
\newblock In \emph{Icml}, volume~1, pages 609--616, 2001.

\bibitem[Zadrozny and Elkan(2002)]{zadrozny2002transforming}
B.~Zadrozny and C.~Elkan.
\newblock Transforming classifier scores into accurate multiclass probability estimates.
\newblock In \emph{Proceedings of the eighth ACM SIGKDD international conference on Knowledge discovery and data mining}, pages 694--699, 2002.

\end{thebibliography}

\appendix
\onecolumn

\begingroup

\appendix

\begin{center}
    \vspace{20pt} 
    \textbf{\Huge{Supplementary Material}}
    \vspace{20pt} 
\end{center}

\section{ALGORITHM}\label{algo}

The algorithm for lower bound map calculation with the HTLB method.

\begin{algorithm}[H]
\caption{Calculating HTLB lower bound map.}
\label{alg:applying_on_cal_data}
\begin{algorithmic}[1]
\REQUIRE window length $m$
\REQUIRE statistic function $f_m$
\REQUIRE confidence level $q$
\REQUIRE calibration data labels $\mathbf{y}$ \COMMENT{sorted by scores $\mathbf{z}$}
\STATE Initialize $lower\_bounds$ \COMMENT{empty list of length $n$}
\FOR{$k = m$ to $n$}
    \STATE $\mathbf{y_{k-m+1,\dots,k}} = (y_{k-m+1}, y_{k-m+2}, \dots, y_{k-1}, y_k)$ \COMMENT{select $m$ true labels up to $k$}
    \STATE $t_k = f_m(\mathbf{y_{k-m+1,\dots,k}})$ \COMMENT{statistic value for the selected region}
    \STATE $\underline{\hat{c}}_k = LB_{f_m,q}(t_k)$ \COMMENT{calculate the lower bound}
    \STATE $lower\_bounds[k] = \underline{\hat{c}}_k$
\ENDFOR
\RETURN $lower\_bounds$
\end{algorithmic}
\end{algorithm}

\section{THEOREMS \& PROOFS}\label{proofs}

\textbf{Optimal Risk Level Selection.}
Let us have a probability prediction function $\hat{c}: \mathcal{X} \rightarrow [0,1]$ that, based on the input, estimates the calibrated probability. We have an outcome function $o: (\mathcal{Y}, [0,\infty)) \rightarrow \mathbb{R}$, that calculates the outcome based on a class label $y$ and a risk level $\xi \in [0, \infty)$. We have selected the following outcome function:

\[
o(y, \xi ) = 
\begin{cases}
    \xi  & \text{if } y = 1, \\
    -\xi ^{l} & \text{if } y = 0,
\end{cases}
\]
where $l \in [1, \infty)$ is a fixed constant describing the imbalance of costs. 

We will derive how to choose a risk level $\xi$  based on calibrated probability $c \in [0, 1)$ (we omit the trivial case of $c=1$). We do it by choosing the risk level that gives the highest expected outcome, assuming the data points are drawn from $Y \sim Bernoulli(c)$. This can be thought of as optimizing the outcome for one score group. We can write it down as:

\begin{equation}
r_{opt}(c) = \arg\max_{\xi}\mathbb{E}_{Y \sim Bern(c)} [o(Y, \xi)]\label{argmaxrisk}
\end{equation}

Let's calculate it:
\begin{align*}
    \mathbb{E}_{Y \sim Bern(c)} [o(Y, \xi)] &=\\
    &=\mathbb{E}[o(Y,\xi)|Y=1]\cdot Pr(Y=1) + \mathbb{E}[o(Y,\xi)|Y=0]\cdot Pr(Y=0)= \\
    &= \xi \cdot c + (-\xi^{l})\cdot(1-c)
\end{align*}

Now, let's take the derivative and equalize with zero to find such maximizing $\xi$:
\begin{align*}
(\xi \cdot c + (-\xi^{l})\cdot(1-c))' &= 0 \\
c + (-l\xi^{l-1})\cdot(1-c) &= 0 \\
\xi^{l-1} &= \frac{c}{l(1-c)} \\
\xi &= \Bigg(\frac{c}{l(1-c)}\Bigg)^{\frac{1}{l-1}}. \\
\end{align*}

\textbf{Inverted hypothesis testing approach for lower bound estimation}. Here we have the formal notation for our cautious calibration approach with more details and proofs. For a better following of the proofs, we will also include the definitions here in the supplementary.

We will first define the heterogeneous Bernoulli vector and then accompany it with a remark.

\begin{definition}
Let $\mathbf{p} = (p_1, p_2, \dots, p_m)$ with each $p_i \in [0,1]$ for $i = 1,\dots, m$. If given a random vector $S^\mathbf{p} = (S^\mathbf{p}_1, S^\mathbf{p}_2, \dots, S^\mathbf{p}_m)$, where $S^\mathbf{p}_i \sim Bernoulli(p_i)$, then we call $S^\mathbf{\mathbf{p}}$ a \textbf{heterogeneous Bernoulli vector} and denote $S^\mathbf{\mathbf{p}} \sim HBernoulli(\mathbf{p})$.
\label{def:hetbernoulli}
\end{definition}

\medskip

\begin{remark}
For ease of reference, if $p_1 = p_2 =\dots = p_m$, we call the heterogeneous Bernoulli vector a \textbf{homogeneous Bernoulli vector}.
\label{remark_homog}
\end{remark}

\medskip
Now we define how we categorize the probability vectors.

\begin{definition}
Let $\mathbf{p} = (p_1, \dots, p_m)$ with each $p_i \in[0, 1]$ for $i=1, \dots, m$. When holds that $0 \leq p_1 \leq \dots \leq p_m \leq 1$, the probability vector $\mathbf{p}$ is called \textbf{monotonic}, and in case $p_1 = \dots = p_m$, \textbf{homogeneous}.
Let's note that if we have two monotonic probability vectors $\mathbf{p^{(1)}}$ and $\mathbf{p^{(2)}}$, the inequalities apply pointwise, e.g.  $\mathbf{p^{(1)}} \leq \mathbf{p^{(2)}}$ is defined as $p^{(1)}_i \leq p^{(2)}_i$ $\forall \, i=1, \dots, m$.
    \label{def:monotonicprobvec}
\end{definition}

\medskip
Now, we continue with the definitions of the CDF for the statistic function $f_m$ of the heterogeneous Bernoulli vector and the lower bound.
\begin{definition}
We define a \textbf{CDF for the statistic function $f_m$ of the heterogeneous Bernoulli vector  $S^\mathbf{p} \sim HBernoulli(\mathbf{p})$} as $$F_{f_m,\mathbf{p}}(t) := Pr(f_m(S^\mathbf{p}) < t),$$ where $t \in \mathbb{R}$ and $\mathbf{p} \in [0,1]^m$.
\label{def:hetbernoullicdf}
\end{definition}

\medskip

\begin{definition}
We define a \textbf{lower bound function} for any $t\in\mathbb{R}$, a fixed probability level $q$ and statistic function $f_m$ as 
\begin{align*}
    LB_{f_m,q}(t) :&= \max\,\{\,p\, |\,F_{f_m,\mathbf{p}}(t) \geq q ,\,  \mathbf{p} = (p, \dots, p), p\in[0,1] \}\\
    &=  \max\,\{\,p\, |\,Pr(f_m(S^\mathbf{p}) < t) \geq q ,\,  \mathbf{p} = (p, \dots, p), p\in[0,1] \}.
\end{align*}

\label{def:lowerbound}
\end{definition}

\medskip

Definition \ref{def:bitflipping} introduces a 0$\,\shortrightarrow$1 bit-flipping function. This function gets a binary sequence as input and flips one 0 to 1, e.g. $(0, 1, 0, 0)$ could become $(0, 1, 1, 0)$.

\begin{definition}
\label{def:bitflipping}
For any $k \in \{1, 2, \dots, m\}$ we define the function $g_k: \{0, 1\}^m \rightarrow \{0, 1\}^m$ by 
\[g_k(\mathbf{v})_i=
\begin{cases} 
v_i & \text{, } i \neq k, \\
1 & \text{, } i = k.
\end{cases}
\]
 for any $\mathbf{v}=(v_1,\dots,v_m)\in \{0, 1\}^m$. We refer to the functions $g_1,\dots,g_m$ as the \textbf{0$\,\shortrightarrow$1 bit-flipping} functions.
\end{definition}

\medskip

 Definition \ref{def:monotonicbitflipping} introduces a concept of a function being monotonic with respect to 0$\,\shortrightarrow$1 bit-flipping. This means that when given an original binary sequence $\mathbf{v}$ (e.g. $\mathbf{v} = (0, 1, 0, 0)$) and 0$\,\shortrightarrow$1 bit-flipped sequence $g_k(\mathbf{v})$ (e.g. $g_3(\mathbf{v}) = (0, 1, 1, 0)$), the statistic calculated with function $f$ is larger or equal with the bit-flipped sequence. An example of a function that is monotonic with respect to 0$\,\shortrightarrow$1 bit-flipping is the sum function (e.g. if $f$ is the sum function then $f((0, 1, 0, 0)) = 1$ is smaller than $f((0, 1, 1, 0)) = 2$).

\begin{definition}
\label{def:monotonicbitflipping}
We say that a statistic function $f_m: \{0, 1\}^m \rightarrow \mathbb{R}$ is \textbf{monotonic w.r.t 0$\,\shortrightarrow$1 bit-flipping} if for every $\mathbf{v} \in \{0, 1\}^m$ and every 0$\,\shortrightarrow$1 bit-flipping function $g_k$, where $k\in\{1, 2, \dots, m\}$, it holds that $f_m(\mathbf{v}) \leq f_m(g_k(\mathbf{v}))$.
\end{definition}

\medskip
If we know that our function $f_m$ is monotonic with respect to 0$\,\shortrightarrow$1 bit-flipping, then the heterogeneous Bernoulli CDFs generated with this function have a property which is given in Proposition \ref{def:bitflippingcdfs}. Namely, if we have two monotonic probability vectors $\mathbf{p}$ and $\mathbf{p'}$ where $\mathbf{p} \leq \mathbf{p'}$, the corresponding heterogeneous Bernoulli CDFs have the opposite relationship, where the one defined by higher probability vector is dominated by the one defined by lower probability vector.

\begin{proposition}
\label{def:bitflippingcdfs}
Let the statistic function $f_m$ be monotonic w.r.t. 0$\,\shortrightarrow$1 bit-flipping. For any two monotonic probability vectors $\mathbf{p}$ and $\mathbf{p'}$ where $\mathbf{p} \leq \mathbf{p'}$ it holds that $F_{f_m,\mathbf{p}}(t) \geq F_{f_m,\mathbf{p'}}(t) \; \forall \; t \in \mathbb{R}$.
\end{proposition}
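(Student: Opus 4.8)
The plan is to reduce the comparison between $\mathbf{p}$ and $\mathbf{p'}$ to a sequence of single-coordinate changes, and at each such step show the CDF can only go down. First I would set up a coupling argument: I want to construct, on a common probability space, random vectors $S$ and $S'$ with $S \sim HBernoulli(\mathbf{p})$ and $S' \sim HBernoulli(\mathbf{p'})$ such that $S \leq S'$ coordinatewise with probability one. The standard way is to take independent $U_1,\dots,U_m$ uniform on $[0,1]$ and set $S_i = \mathbf{1}[U_i \leq p_i]$ and $S_i' = \mathbf{1}[U_i \leq p_i']$. Since $p_i \leq p_i'$ for every $i$, we get $S_i \leq S_i'$ for every $i$ surely, and the marginals are exactly as required (independence across $i$ is inherited from independence of the $U_i$).

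Next I would use the fact that if $\mathbf{v} \leq \mathbf{w}$ coordinatewise in $\{0,1\}^m$, then $\mathbf{w}$ is obtained from $\mathbf{v}$ by a finite chain of $0\shortrightarrow1$ bit-flips (flip, one at a time, each coordinate where $v_i = 0 < 1 = w_i$). Applying Definition \ref{def:monotonicbitflipping} repeatedly along this chain yields $f_m(\mathbf{v}) \leq f_m(\mathbf{w})$. Hence $f_m$ is monotone nondecreasing with respect to the coordinatewise partial order on $\{0,1\}^m$. Combined with the coupling, this gives $f_m(S) \leq f_m(S')$ pointwise on the probability space. Therefore the event $\{f_m(S') < t\}$ is contained in the event $\{f_m(S) < t\}$, so
\[
F_{f_m,\mathbf{p}}(t) = \Pr(f_m(S) < t) \geq \Pr(f_m(S') < t) = F_{f_m,\mathbf{p'}}(t)
\]
for every $t \in \mathbb{R}$, which is the claim. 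Note the monotonicity hypothesis on $\mathbf{p}$ and $\mathbf{p'}$ themselves is not actually needed for this argument — only $\mathbf{p} \leq \mathbf{p'}$ — but stating it matches the paper's usage.

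I do not expect a serious obstacle here; the only point requiring a little care is making the coupling genuinely a valid pair of heterogeneous Bernoulli vectors with the right joint independence structure, and being explicit that "monotone w.r.t. a single bit-flip" bootstraps to "monotone w.r.t. the full product order" via a finite chain. If one prefers to avoid an explicit coupling, an alternative is a direct induction: define the hybrid vectors $\mathbf{p}^{(0)} = \mathbf{p}, \mathbf{p}^{(1)}, \dots, \mathbf{p}^{(m)} = \mathbf{p'}$ where $\mathbf{p}^{(j)}$ agrees with $\mathbf{p'}$ in the first $j$ coordinates and with $\mathbf{p}$ afterwards, and show $F_{f_m,\mathbf{p}^{(j-1)}}(t) \geq F_{f_m,\mathbf{p}^{(j)}}(t)$ by conditioning on all coordinates except the $j$-th and using that raising a single success probability from $p_j$ to $p_j'$ shifts mass from the bit-value $0$ (smaller $f_m$) to the bit-value $1$ (larger $f_m$); this single-coordinate step is exactly where monotonicity w.r.t. $0\shortrightarrow1$ bit-flipping is invoked. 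Either route is short; I would present the coupling version as the cleaner one.
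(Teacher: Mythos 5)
Your coupling argument is correct, and it takes a genuinely different route from the paper. The paper proves the proposition by interpolating between $\mathbf{p}$ and $\mathbf{p'}$ through hybrid vectors that agree with $\mathbf{p'}$ on a suffix and with $\mathbf{p}$ elsewhere, and then shows each single-coordinate step lowers the CDF by summing over all binary vectors with a $0$ in that coordinate and comparing $(1-p_i)\mathds{1}_{\{f_m(\omega)<t\}}+p_i\mathds{1}_{\{f_m(g_i(\omega))<t\}}$ against the same expression with $p_i'$ --- i.e.\ exactly the ``alternative'' induction you sketch at the end, except the paper conditions by explicit summation rather than by a conditioning statement. Your primary route instead builds a monotone coupling $S_i=\mathbf{1}[U_i\le p_i]$, $S_i'=\mathbf{1}[U_i\le p_i']$ and bootstraps single-bit-flip monotonicity to monotonicity with respect to the coordinatewise order via a finite chain of flips, giving $f_m(S)\le f_m(S')$ surely and hence containment of the events $\{f_m(S')<t\}\subseteq\{f_m(S)<t\}$. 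The coupling version is shorter and more conceptual (it is the standard stochastic-dominance argument), at the cost of needing the chain lemma to upgrade single flips to the full product order and of relying on the (implicit in Definition~\ref{def:hetbernoulli}, but used in the paper's proof as well) independence of the coordinates; the paper's version is more elementary and only ever invokes one bit-flip at a time. Your observation that the monotonicity of $\mathbf{p}$ and $\mathbf{p'}$ themselves is never used --- only $\mathbf{p}\le\mathbf{p'}$ pointwise --- is also accurate: the paper checks monotonicity of its hybrid vectors only to stay within the stated hypothesis class, not because the step inequalities require it.
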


\medskip

\begin{proof}

We are given monotonic probability vectors $\mathbf{p} = (p_1, \dots, p_m)$ and $\mathbf{p'} = (p'_1,  \dots, p'_m)$. Based on $\mathbf{p}$ and $\mathbf{p'}$ let's construct a  vector $\mathbf{\tilde{p}^{(i)}} = (p_1,\dots,p_{i-1},p_{i},p'_{i+1},\dots,p'_m)$, where $i = 1,\dots,m$. By the construction of $\mathbf{\tilde{p}^{(i)}}$, the sequence of inequalities $$\mathbf{p} =\mathbf{\tilde{p}^{(m)}} \leq \dots\leq \mathbf{\tilde{p}^{(i)}}\leq \mathbf{\tilde{p}^{(i-1)}} \leq \dots \leq \mathbf{\tilde{p}^{(0)}} = \mathbf{p'} $$ holds. Namely, for any $i \in \{1,\dots,m\}$ the vectors $\mathbf{\tilde{p}^{(i)}}$ and $\mathbf{\tilde{p}^{(i-1)}}$ differ only at the position $i$ where respectively $p_{i}  \leq p'_{i}$ rendering any such inequality $\mathbf{\tilde{p}^{(i)}}\leq \mathbf{\tilde{p}^{(i-1)}}$ to hold. Furthermore, the constructed vector $\mathbf{\tilde{p}^{(i)}}$ is also monotonic since the right-hand side and the left-hand side until the position $i$ are in themselves monotonic and at the position $i$ the monotonicity is preserved thanks to $p_{i}  \leq p'_{i} \leq p'_{i+1}$. Therefore, we can rewrite our claim as follows:
\begin{equation}\label{sequence}
    F_{f_m, \mathbf{p}}(t) = F_{f_m, \mathbf{\tilde{p}^{(m)}}}(t) \geq\ldots\geq F_{f_m, \mathbf{\tilde{p}^{(i)}}}(t) \geq F_{f_m, \mathbf{\tilde{p}^{(i-1)}}}(t)\geq\ldots \geq F_{f_m, \mathbf{\tilde{p}^{(0)}}}(t) = F_{f_m, \mathbf{p'}}(t).
\end{equation}

\medskip
So, it is sufficient to show that $F_{f_m, \mathbf{\tilde{p}^{(i)}}}(t) \geq F_{f_m, \mathbf{\tilde{p}^{(i-1)}}}(t)$ for any $i\in\{1,\dots,m\}$. Let's fix $i\in\{1,\dots,m\}$ and denote a binary vector that have $0$ in the position $i$ as $\omega = (\omega_1,\dots,\omega_{i-1},0,\omega_{i+1},\dots,\omega_m)$ and a set of such vectors $\Omega_{(i \to 0)}$. Now, we can write based on the Definition \ref{def:hetbernoullicdf}:

\begin{align}
    F_{f_m, \mathbf{\tilde{p}^{(i)}}}(t) =  \Pr(f_m(S^\mathbf{\tilde{p}^{(i)}}) < t)  
     &= \sum_{\omega \in  \Omega_{(i \to 0)}}\Pr(\omega) \mathds{1}_{\{f_m(\omega) < t\}} + \Pr(g_i(\omega)) \mathds{1}_{\{f_m(g_i(\omega)) < t\}} = \label{kolmas1}\\
     &= \sum_{\omega \in \Omega_{(i \to 0)}}\Pr(\omega_{/i})((1 - p_i) \mathds{1}_{\{f_m(\omega) < t\}} + p_i \mathds{1}_{\{f_m(g_i(\omega)) < t\}}),
      \label{kolmas2}
\end{align}
where $\omega_{/i}$ means that $i$'th element is excluded from the vector and
\begin{equation}
    F_{f_m, \mathbf{\tilde{p}^{(i-1)}}}(t) =  \Pr(f_m(S^\mathbf{\tilde{p}^{(i-1)}}) < t) = \sum_{\omega \in \Omega_{(i \to 0)}}\Pr(\omega_{/i})((1 - p_i') \mathds{1}_{\{f_m(\omega) < t\}} + p_i' \mathds{1}_{\{f_m(g_i(\omega)) < t\}}).\label{neljas1}
\end{equation}

\medskip
In the Equation (\ref{kolmas1}), we rewrite the CDF as a sum of occurring probabilities of all possible binary vectors of length $m$ that satisfy the condition $f_m(S^\mathbf{\tilde{p}^{(i)}}) < t$. Specifically, we sum over all vectors $\omega\in\Omega_{(i \to 0)}$ and their 0$\shortrightarrow$1 bit-flipped versions $g_i(\omega) = (\omega_1,\dots,\omega_{i-1},1,\omega_{i+1},\dots,\omega_m)$. For example, the probability $Pr(\omega)$ of drawing one particular vector from $S^\mathbf{\tilde{p}^{(i)}} \sim HBernoulli(\mathbf{\tilde{p}^{(i)}})$ is calculated as the product of the individual Bernoulli probabilities. Lastly, the indicator function preserves only the probabilities if the statistic function $f_m$ of the corresponding vector is smaller than $t$.

\medskip

Let's fix $\omega\in\Omega_{i\to 0}$ and write based on Equations (\ref{kolmas2}) and (\ref{neljas1}):

$$(1 - p_i) \mathds{1}_{\{f_m(\omega) < t\}} + p_i \mathds{1}_{\{f_m(g_i(\omega)) < t\}} \geq (1 - p'_i) \mathds{1}_{\{f_m(\omega) < t\}} + p'_i \mathds{1}_{\{f_m(g_i(\omega)) < t\}}.$$

Now, recalling that vectors $\mathbf{\tilde{p}^{(i)}}$ and $\mathbf{\tilde{p}^{(i-1)}}$ differ only at the position $i$ where respectively $p_{i}  \leq p'_{i}$, in other words, exists $\epsilon \geq 0$ such that $p'_i = p_i + \epsilon$, we can substitute and simplify:

$$(1 - p_i) \mathds{1}_{\{f_m(\omega) < t\}} + p_i \mathds{1}_{\{f_m(g_i(\omega)) < t\}} \geq (1 - (p_i + \epsilon)) \mathds{1}_{\{f_m(\omega) < t\}} + (p_i + \epsilon) \mathds{1}_{\{f_m(g_i(\omega)) < t\}}.$$

$$\epsilon \mathds{1}_{\{f_m(\omega) < t\}} \geq \epsilon \mathds{1}_{\{f_m(g_i(\omega)) < t\}}$$

$$\mathds{1}_{\{f_m(\omega) < t\}} \geq \mathds{1}_{\{f_m(g_i(\omega)) < t\}}$$

Since, $f_m$ is monotonic w.r.t to 0$\shortrightarrow$1 bit-flipping i.e. $f_m(\omega) \leq f_m(g_i(\omega))$ then the inequality holds, meaning that $F_{f_m, \mathbf{\tilde{p}^{(i)}}}(t) \geq F_{f_m, \mathbf{\tilde{p}^{(i-1)}}}(t)$. Where, due to the sequence of inequalities (\ref{sequence}), we get that $F_{f_m, \mathbf{p}}(t) \geq F_{f_m, \mathbf{p'}}(t)$, which is what we wanted to show.
\end{proof}

%If no such index $k$ exists then $v_i = v'_i = 1$ for any $i=1,\dots,m$.
\medskip

Now we can prove the main result of the paper.

\begin{theorem}
\label{theorem:guarantees}
Let $\mathbf{c}=(c_1,\dots,c_m)$ be a monotonic probability vector and let $D^{\mathbf{c}} \sim HBernoulli(\mathbf{c})$ be a heterogeneous Bernoulli vector of length $m$. If statistic function $f_m$ is monotonic w.r.t. 0$\,\shortrightarrow$1 bit-flipping then for any fixed probability level $q\in[0,1]$ and any statistic value $t\in\mathbb{R}$ it holds that:
%an observed value $d$ of  $D^{\mathbf{c}}$ and $t = f_n(d)$ it holds that:

$$\Pr\,[\,f_m(D^{\mathbf{c}}) \geq t \,|\, c_m < LB_{f_m,q}(t)\,] \leq 1 - q.$$ 
\end{theorem}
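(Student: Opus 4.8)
The plan is to reduce the heterogeneous case to the homogeneous case using Proposition~\ref{def:bitflippingcdfs}, and then to unwind the definition of $LB_{f_m,q}$. First I would fix $q$ and $t$, and assume the conditioning event holds, i.e. $c_m < LB_{f_m,q}(t)$; I want to bound $\Pr[f_m(D^{\mathbf{c}}) \geq t]$ from above by $1-q$, equivalently to bound $F_{f_m,\mathbf{c}}(t) = \Pr[f_m(D^{\mathbf{c}}) < t]$ from below by $q$. Note the conditioning here is on a deterministic event (it concerns only the fixed vector $\mathbf{c}$, not the random draw $D^{\mathbf{c}}$), so the conditional probability is either the unconditional probability (when $c_m < LB_{f_m,q}(t)$) or vacuous; hence it suffices to show: if $c_m < LB_{f_m,q}(t)$ then $F_{f_m,\mathbf{c}}(t) \geq q$.

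The key step is to introduce the homogeneous comparison vector $\mathbf{p}^\star := (c_m, c_m, \dots, c_m)$. Since $\mathbf{c}$ is monotonic we have $c_i \leq c_m$ for all $i$, so $\mathbf{c} \leq \mathbf{p}^\star$ pointwise, and both vectors are monotonic. Proposition~\ref{def:bitflippingcdfs} then gives $F_{f_m,\mathbf{c}}(t) \geq F_{f_m,\mathbf{p}^\star}(t)$. So it remains to show $F_{f_m,\mathbf{p}^\star}(t) \geq q$, i.e. that the homogeneous vector with common parameter $c_m$ has CDF value at least $q$ at $t$. This is exactly where the hypothesis $c_m < LB_{f_m,q}(t)$ enters.

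To close this, I would unpack $LB_{f_m,q}(t) = \max\{\,p \mid F_{f_m,(p,\dots,p)}(t) \geq q\,\}$. Let $p^{\max} := LB_{f_m,q}(t)$, so $F_{f_m,(p^{\max},\dots,p^{\max})}(t) \geq q$ by definition of the max being attained (assuming the set is closed / the max exists, which is the standing assumption in Definition~\ref{def:lowerbound}). Now $c_m < p^{\max}$ gives $(c_m,\dots,c_m) \leq (p^{\max},\dots,p^{\max})$ pointwise, both homogeneous (hence monotonic), so applying Proposition~\ref{def:bitflippingcdfs} once more yields $F_{f_m,\mathbf{p}^\star}(t) = F_{f_m,(c_m,\dots,c_m)}(t) \geq F_{f_m,(p^{\max},\dots,p^{\max})}(t) \geq q$. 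Combining with the earlier inequality, $F_{f_m,\mathbf{c}}(t) \geq q$, hence $\Pr[f_m(D^{\mathbf{c}}) \geq t] = 1 - F_{f_m,\mathbf{c}}(t) \leq 1-q$, as required.

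I expect the main subtlety to be the two-step monotone comparison: one must be careful that Proposition~\ref{def:bitflippingcdfs} is applied only to pairs of \emph{monotonic} vectors (satisfied here since constant vectors and the given $\mathbf{c}$ are all monotonic, and the intermediate vector $\mathbf{p}^\star$ is constant), and that the $\max$ in the definition of $LB_{f_m,q}$ is genuinely attained so that $F_{f_m,(p^{\max},\dots,p^{\max})}(t) \geq q$ holds with $p^{\max}$ itself. A minor point worth spelling out is the interpretation of conditioning on the deterministic event $\{c_m < LB_{f_m,q}(t)\}$, so that no genuine conditional-probability manipulation is needed — the statement is really a material implication. Everything else is bookkeeping with the definitions already in the excerpt.
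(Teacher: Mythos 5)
Your proposal is correct and follows essentially the same route as the paper's proof: reduce to the homogeneous vector $(c_m,\dots,c_m)$, chain two applications of Proposition~\ref{def:bitflippingcdfs} with the homogeneous vector at the value $LB_{f_m,q}(t)$, and use that the maximum in Definition~\ref{def:lowerbound} is attained so the CDF there is at least $q$. Your explicit remarks about the conditioning being a deterministic event and the attainment of the max are sound clarifications of points the paper treats implicitly.
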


\medskip

\begin{proof}

Let's denote $\underline{\hat{c}} = LB_{f_m, q}(t)$ and define homogeneous probability vector $\mathbf{\underline{\hat{c}}} = (\underline{\hat{c}}, ..., \underline{\hat{c}})$ of length $m$. We notice that by the Definition \ref{def:lowerbound} it also holds that 
\begin{equation}
    Pr(f_m(S^\mathbf{\underline{\hat{c}}}) < t) \geq q.  \label{notice}
\end{equation}
Let's define another probability vector $\mathbf{c_m} = (c_m, ..., c_m)$ also of length $m$.

\hfill

Let's assume that $c_m < \underline{\hat{c}}$.

\hfill

We then know that $c_i \leq c_m \leq \underline{\hat{c}},\, \forall i=1,\dots,m$ and according to Proposition \ref{def:bitflippingcdfs} we get

$$F_{f_m, \mathbf{c}}(t) \geq F_{f_m, \mathbf{c_m}}(t) \geq F_{f_m, \mathbf{\underline{\hat{c}}}}(t)$$

From the Definition \ref{def:hetbernoullicdf} and Equation (\ref{notice}) we get that

$$\Pr(f_m(D^{\mathbf{c}}) < t) \geq \Pr(f_m(S^{\mathbf{c_m}}) < t) \geq \Pr(f_m(S^{\mathbf{\underline{\hat{c}}}}) < t) \geq q$$

$$\Pr(f_m(D^{\mathbf{c}}) < t) \geq q \Rightarrow \Pr(f_m(D^{\mathbf{c}}) \geq t) \leq 1 - q$$

This means that we have shown that 

$$\Pr\,[\,f_m(D^{\mathbf{c}}) \geq t \,|\, c_m < LB_{f_m, q}(t)\,] \leq 1 - q.$$

\end{proof}

Now, let's prove that the sum statistic function is monotonic w.r.t 0$\shortrightarrow$1 bit-flipping.

\begin{proposition}
      A sum statistic function $f_m^{\text{sum}}$ is monotonic w.r.t 0$\shortrightarrow$1 bit-flipping.
     \label{sumstatfun}
\end{proposition}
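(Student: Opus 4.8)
The plan is to unwind the definitions and observe that flipping a single bit from $0$ to $1$ can only increase the sum, by exactly one if the bit was previously $0$ and by zero if it was already $1$. Concretely, I would fix an arbitrary binary vector $\mathbf{v}=(v_1,\dots,v_m)\in\{0,1\}^m$ and an arbitrary index $k\in\{1,\dots,m\}$, and then directly compare $f_m^{\text{sum}}(\mathbf{v})$ with $f_m^{\text{sum}}(g_k(\mathbf{v}))$ using Definition~\ref{def:bitflipping}.

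The key computation: since $g_k(\mathbf{v})_i = v_i$ for all $i\neq k$ and $g_k(\mathbf{v})_k = 1$, we can split off the $k$-th term and write
\[
f_m^{\text{sum}}(g_k(\mathbf{v})) = \sum_{i=1}^m g_k(\mathbf{v})_i = \sum_{i\neq k} v_i + 1 = \Bigl(\sum_{i=1}^m v_i\Bigr) - v_k + 1 = f_m^{\text{sum}}(\mathbf{v}) + (1-v_k).
\]
Because $v_k\in\{0,1\}$ we have $1-v_k\geq 0$, and therefore $f_m^{\text{sum}}(\mathbf{v}) \leq f_m^{\text{sum}}(g_k(\mathbf{v}))$. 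Since $\mathbf{v}$ and $k$ were arbitrary, this is exactly the condition of Definition~\ref{def:monotonicbitflipping}, so $f_m^{\text{sum}}$ is monotonic w.r.t. $0\shortrightarrow1$ bit-flipping.

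There is no real obstacle here; the statement is essentially immediate from the definitions, and the only thing to be careful about is bookkeeping the single changed coordinate correctly. The payoff is that, combined with Theorem~\ref{theorem:guarantees}, this immediately licenses the use of the sum statistic (and hence the Clopper–Pearson lower bound) for cautious calibration with the stated probabilistic guarantee.
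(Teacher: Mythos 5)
Your proof is correct and matches the paper's argument: both split off the $k$-th coordinate and use $v_k\leq 1$ to conclude $f_m^{\text{sum}}(\mathbf{v})\leq f_m^{\text{sum}}(g_k(\mathbf{v}))$. The only cosmetic difference is that you phrase the gap as $1-v_k\geq 0$ rather than bounding $v_k$ by $1$ directly, which is the same computation.
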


\begin{proof}
Given an arbitrary binary vector $\mathbf{v}\in \{0,1\}^m$ and its 0$\shortrightarrow$1 bit-flipped version $\mathbf{v}' = g_k(\mathbf{v})$, let's demonstrate that for a function $f_m^{\text{sum}}(\mathbf{s}) := \Sigma_{i=1}^m s_i$ and for any $k\in\{1,\dots,m\}$, it holds that $f_m^{\text{sum}}(\mathbf{v}) \leq f_m^{\text{sum}}(g_k(\mathbf{v}))$:
    $$f_m^{\text{sum}}(\mathbf{v}) = v_k + \Sigma_{i \neq k} v_i \leq 1 + \Sigma_{i \neq k} v_i = f_m^{\text{sum}}(g_k(\mathbf{v}))$$

\end{proof}

\medskip

And lastly, that the max statistic function is monotonic w.r.t 0$\shortrightarrow$1 bit-flipping.
% $$f_m^{max-cp}(v) := \max\{LB_{g_j,q}(g_j^{sum}(v_{m-j+1,\dots,m})), j=2,\dots,m\}.$$
\begin{proposition}
     A max statistic function $f_{m_1, m_2}^{\text{max-cp}}$ is monotonic w.r.t 0$\shortrightarrow$1 bit-flipping.\label{maxstatfun}
\end{proposition}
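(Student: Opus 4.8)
The plan is to reduce the claim to two elementary monotonicity facts and compose them. Write $y\in\{0,1\}^{m_2}$ and fix a $0\,\shortrightarrow\,1$ bit-flipping function $g_k$ with $k\in\{1,\dots,m_2\}$. For each window size $j$ with $m_1\le j\le m_2$ set $\sigma_j(y):=f_j^{\text{sum}}(y_{m_2-j+1,\dots,m_2})$, the number of ones in the length-$j$ suffix of $y$, and set $h_j(y):=LB_{f_j^{\text{sum}},q}(\sigma_j(y))$, so that $f_{m_1,m_2}^{\text{max-cp}}(y)=\max\{h_j(y):m_1\le j\le m_2\}$. Since the pointwise maximum of finitely many functions satisfying $h_j(y)\le h_j(g_k(y))$ for all $j$ itself satisfies $\max_j h_j(y)\le\max_j h_j(g_k(y))$, it suffices to prove that each $h_j$ is non-decreasing under the flip.

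First I would record that $t\mapsto LB_{f_m,q}(t)$ is non-decreasing. For a fixed homogeneous vector $(p,\dots,p)$ the quantity $F_{f_m,(p,\dots,p)}(t)=\Pr(f_m(S^{(p,\dots,p)})<t)$ is non-decreasing in $t$, so $t\le t'$ gives the inclusion $\{p:F_{f_m,(p,\dots,p)}(t)\ge q\}\subseteq\{p:F_{f_m,(p,\dots,p)}(t')\ge q\}$; taking the maximum over each of these sets (and recalling Definition~\ref{def:lowerbound}) yields $LB_{f_m,q}(t)\le LB_{f_m,q}(t')$.

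Second I would check that $\sigma_j$ is non-decreasing under the flip. The suffix windows $\{m_2-j+1,\dots,m_2\}$ are nested from the right, so flipping coordinate $k$ from $0$ to $1$ either changes a coordinate lying inside this window, in which case $\sigma_j$ increases by exactly $1$, or changes a coordinate outside it, in which case $\sigma_j$ is unchanged; either way $\sigma_j(y)\le\sigma_j(g_k(y))$ (this is just Proposition~\ref{sumstatfun} applied to the restricted coordinates). Composing with the monotonicity of $LB_{f_j^{\text{sum}},q}$ from the previous step gives $h_j(y)=LB_{f_j^{\text{sum}},q}(\sigma_j(y))\le LB_{f_j^{\text{sum}},q}(\sigma_j(g_k(y)))=h_j(g_k(y))$, which is exactly what was needed, so $f_{m_1,m_2}^{\text{max-cp}}(y)\le f_{m_1,m_2}^{\text{max-cp}}(g_k(y))$.

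I expect no genuine obstacle here; the only points that deserve a moment's care are (a) that the monotonicity of $LB_{f_m,q}$ in its argument must be derived from the definition via the set-inclusion argument rather than taken for granted, and (b) the bookkeeping that the suffix windows are nested so a single $0\,\shortrightarrow\,1$ flip can only raise, never lower, each $\sigma_j$, which is what lets the composition go through uniformly in $j$.
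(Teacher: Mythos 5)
Your proposal is correct and follows essentially the same route as the paper: reduce to monotonicity of the suffix sums (Proposition~\ref{sumstatfun}), monotonicity of $t\mapsto LB_{f_j^{\text{sum}},q}(t)$, and monotonicity of the finite maximum. Your set-inclusion derivation of the $LB$ monotonicity is a slightly cleaner phrasing of the paper's argument (which additionally invokes Proposition~\ref{def:bitflippingcdfs}), but the substance is the same.
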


\begin{proof}
Given an arbitrary binary vector $\mathbf{v}\in \{0,1\}^m$ and its 0$\shortrightarrow$1 bit-flipped version $\mathbf{v}' = g_k(\mathbf{v})$, let's show that for a max statistic function  $$f_{m_1, m_2}^{\text{max-cp}}(\mathbf{v}) := \max\{LB_{f_j^{\text{sum}},q}(f_j^{\text{sum}}(v_{m_1-j+1,\dots,m_2})), j=m_1,\dots,m_2\}$$ it holds that $f_{m_1, m_2}^{\text{max-cp}}(\mathbf{v}) \leq f_{m_1, m_2}^{\text{max-cp}}(g_k(\mathbf{v}))$. 

Let's note that by the Proposition \ref{sumstatfun}, sum statistic function $f_j^{\text{sum}}(\mathbf{s})$ can only increase when changing 0 to 1, i.e. $f_j^{\text{sum}}(\mathbf{v}) \leq f_j^{\text{sum}}(g_k(\mathbf{v}))$ and $\max$ function is also monotonic.

Therefore, it remains to show that $LB_{f_j^{\text{sum}},q}$ is monotonic, i.e. $$p = LB_{f_j^{\text{sum}},q}(t)\leq LB_{f_j^{\text{sum}},q}(t') = p'$$ for any $t\leq t'\in \mathbb{R}$, $q\in[0,1]$ and $j \in \{m_1,\dots,m_2 \}$, which based on Definition \ref{def:lowerbound} means:

$$ p =\max\{p |\,F_{f_j^{\text{sum}},\mathbf{p}}(t)\geq q ,  \mathbf{p} = (p, \dots, p), p\in[0,1] \} \leq \max\{p |\,F_{f_j^{\text{sum}},\mathbf{p}}(t')\geq q , \mathbf{p} = (p, \dots, p), p\in[0,1] \} =p'$$

Let's fix $\mathbf{p} = (p,\dots,p)$. Since, CDFs are non-decreasing functions and $t \leq t'$, we have that $q \leq F_{f_j^{\text{sum}},\mathbf{p}}(t) \leq F_{f_j^{\text{sum}},\mathbf{p}}(t')$. This suggests that $\mathbf{p'}$ is at least $\mathbf{p}$, since by the Proposition \ref{def:bitflippingcdfs} CDF $F_{f_j^{\text{sum}},\mathbf{p}}(t)$ at any given point $t\in\mathbb{R}$ decreases by increasing probability vector $\mathbf{p}$ i.e.
$$p' = \max\,\{\,p\, |\,F_{f_j^{\text{sum}},\mathbf{p'}}(t')\geq q ,\, \mathbf{p} = (p, p, \dots, p), p\in[0,1] \}\geq p$$

Since we fixed $j$ and $q$ arbitrarily, we have that $p' \geq p$ and hence the proposition holds. 

%It is because larger t for the same n makes us choose a larger p in the LB function because of the properties of F. Have to show formally.
    
\end{proof}

\section{FIGURES}\label{supfigs}

Figure \ref{fig:true_cal_maps} shows an example of 100 calibration maps. The algorithm chooses a random position $k$ and a random value between 0.9 and 1.0 for $c_k$. Then, the same procedure is repeated recursively for elements between $k+1$ and $n$ and for elements between $1$ and $k-1$.

\begin{figure}[H]
  \centering
  \includegraphics[width=10cm]{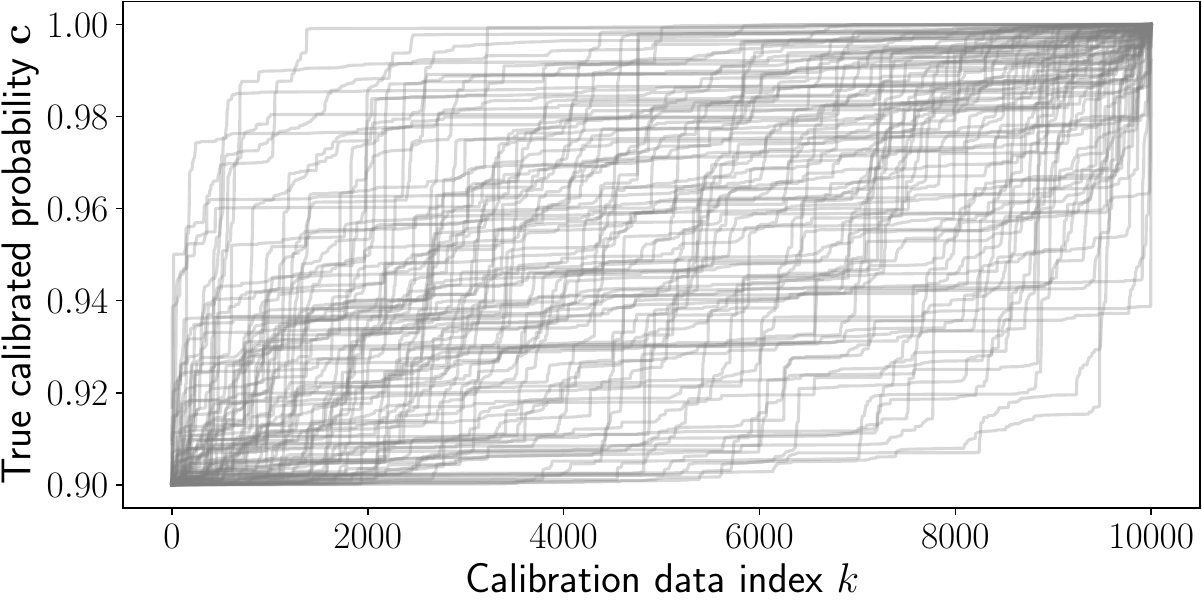}
  \caption{An example of 100 generated true calibration maps.}
  \label{fig:true_cal_maps}
\end{figure}

An example of lower bound estimations with the HTLB+CP method using different window sizes is shown in Figure \ref{fig:subsequence_size}. We can see that small subsequence sizes give estimations that fluctuate a lot, while larger ones are more stable. In our experiments we use size 2000, highlighted in red for this particular example.

\begin{figure}[H]
  \centering
  \includegraphics[width=10cm]{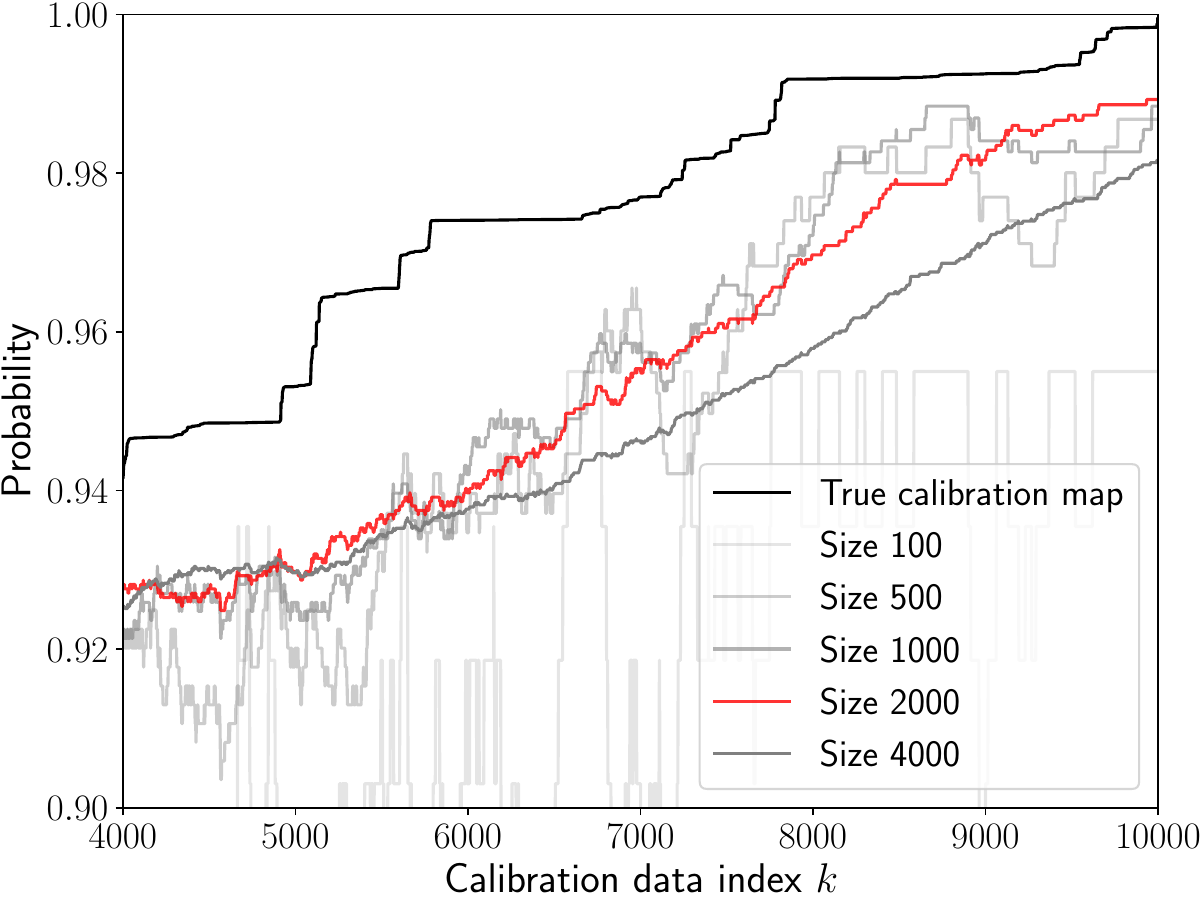}
  \caption{Example of one true calibration map $\mathbf{c}$ (black) and different HTLB+CP estimations with varying subsequence sizes calculated on a calibration set sampled from $\mathbf{c}$}
    \label{fig:subsequence_size}
\end{figure}

The results for the mean outcomes in the example scenario experiments can be seen in Figure \ref{fig:mean_outcome}. We can see that the methods have quite similar results if conservative post-processing has been applied. Without the cutting post-processing the classical calibration methods have both higher and extremely lower outcomes as can be seen in \ref{fig:mean_non_conservative}. This is because the estimations are sometimes very close to the true calibrated probabilities, giving very good outcomes, but can often also be very overconfident, giving extremely bad outcomes. Results for the cautious approaches also change. SVA has both very high and low mean outcomes, HLTB is inferior to RCIR and isobins approaches in the case of mean outcome. This is expected as we are more cautious, and sometimes taking the bigger risk pays off, but as we saw from the 1st percentile results, some score groups will pay for the increase in the average outcome.

\begin{figure}[H]
  \centering
  \includegraphics[width=12cm]{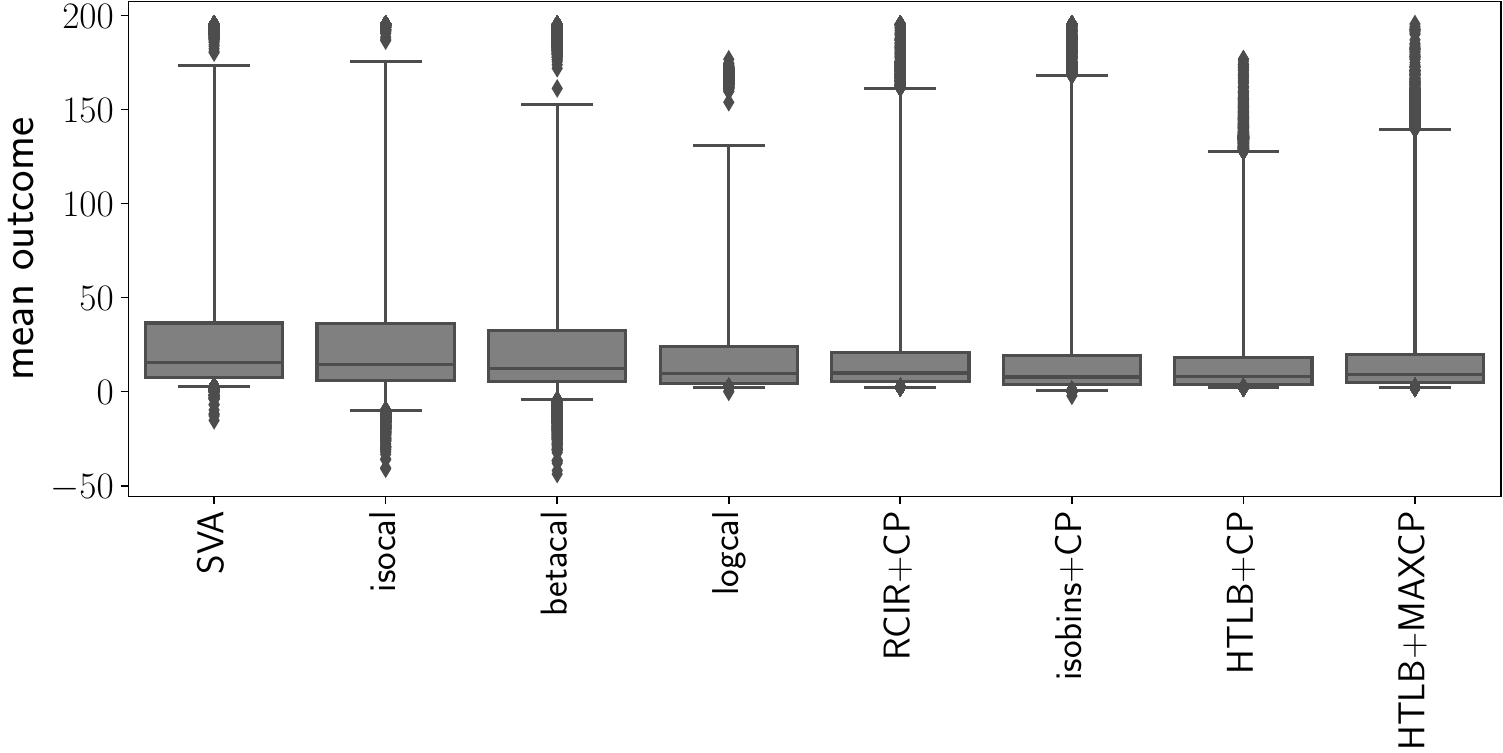}
  \caption{Boxplots for the mean outcome in each of the 50K learned maps for all methods (conservative versions).}
    \label{fig:mean_outcome}
\end{figure}

\begin{figure}[H]
  \centering
  \includegraphics[width=12cm]{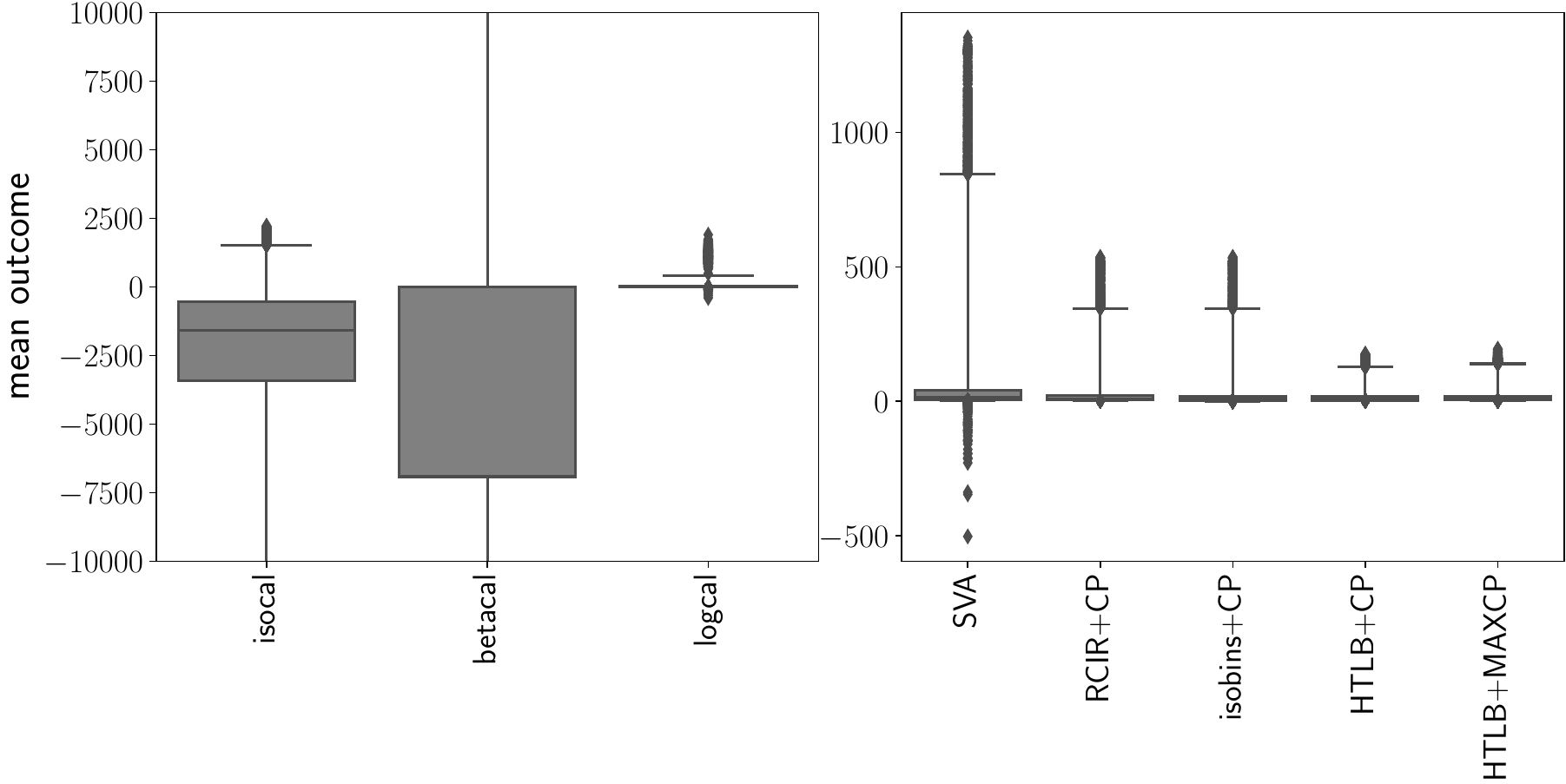}
  \caption{Boxplots for the mean outcome in each of the 50K learned maps for all methods (non-conservative versions, only monotonicity has been enforced to the non-monotonic methods).}
    \label{fig:mean_non_conservative}
\end{figure}

Next, in Figure \ref{fig:1perc_non_conservative}, we show the 1st percentile outcome results for the versions of methods where cutting post-processing wasn't used. Here, as said in the article, our methods have even bigger advantages as they are almost never having a negative 1st percentile outcome.

\begin{figure}[H]
  \centering
  \includegraphics[width=12cm]{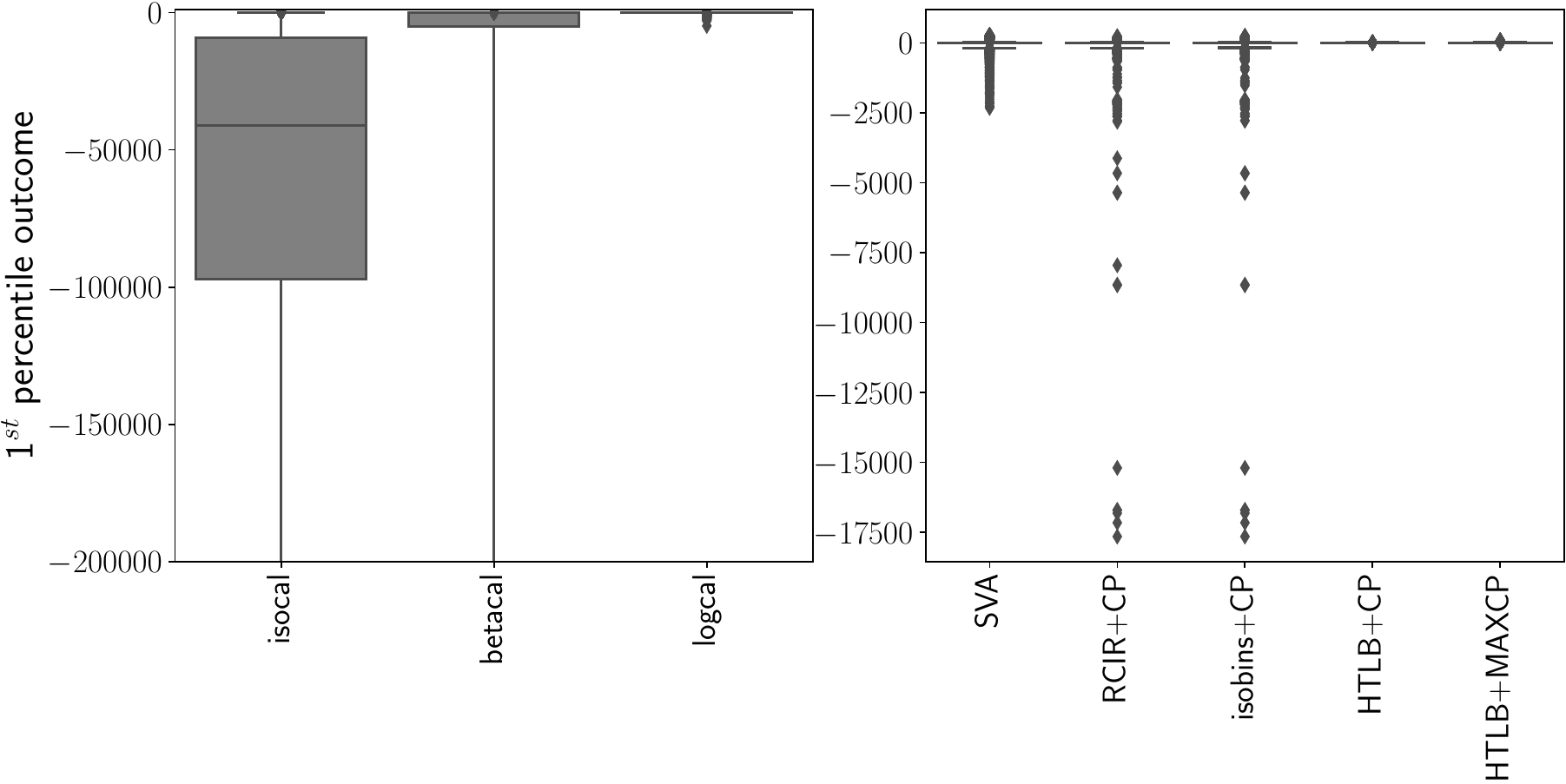}
  \caption{Boxplots for the 1st percentile outcome in each of the 50K learned maps for all methods (non-conservative versions, only monotonicity has been enforced to the non-monotonic methods)..}
    \label{fig:1perc_non_conservative}
\end{figure}

The same type of non-conservative results for the violation percentage can be seen in Figure \ref{fig:viol_non_conservative}. This is quite similar to the conservative version, hinting that the cutting didn't reduce all overconfidence, but decreased it enough so the example scenario expected outcome results were highly influenced by it.

\begin{figure}[H]
  \centering
  \includegraphics[width=12cm]{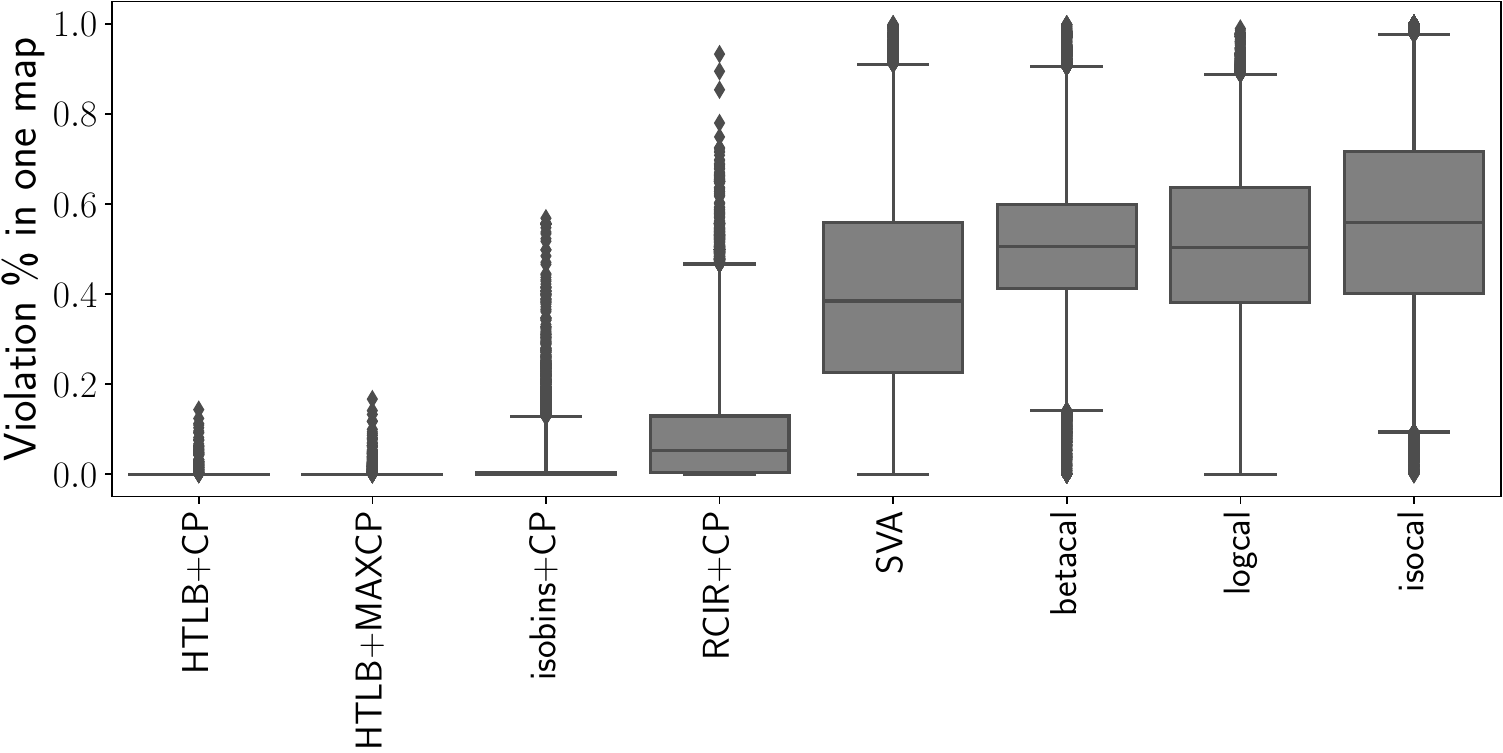}
  \caption{Boxplots for violation percentage in each of the 50K learned maps for all methods (non-conservative versions, only monotonicity has been enforced to the non-monotonic methods).}
    \label{fig:viol_non_conservative}
\end{figure}

\section{EXPERIMENT DETAILS}\label{exp_details}

Here are some remarks about the experiment setup details and reasoning behind choosing certain parameters:
\begin{enumerate}
    \item 100 true calibration maps are generated with $n=10000$ and values increasing between 0.9 and 1.0.
    \item 500 calibration sets are generated for each true calibration map.
    \item When evaluating, the first 2000 points are not included in the metrics, as there are no estimations for those points with HTLB+CP and HTLB+MAXCP methods. This is most likely okay in practice, since it is unlikely that the first points are of interest to us as they usually have lower calibrated probabilities. In practice, one could estimate those probabilities with another method, if some estimations are needed. But still, lower bounds will not be useful for probabilities $< 0.5$. These details can be decided on for each practical problem separately.
\end{enumerate}

\section{CODE}\label{code}

%The code is accessible from Github: \url{https://github.com/mlallikivi/cautious-calibration}.

%The code is accessible from Github: \href{https://github.com/mlallikivi/cautious-calibration}{https://github.com/mlallikivi/cautious-calibration}.

The code is accessible from Github:
\begin{verbatim}
    https://github.com/mlallikivi/cautious-calibration
\end{verbatim}
%\subimport*{./}{supplementary.tex}
\endgroup

\end{document}